\pgfplotsset{compat=1.18} 
\tikzset{dot/.style = {circle, fill, minimum size=#1,inner sep=0pt, outer sep=0pt, fill, circle},dot/.default = 6pt}
\tikzset{dot2/.style = {circle, fill, color=black!40,minimum size=6pt,inner sep=0pt, outer sep=0pt, fill, circle}}
\tikzstyle{a}=[->,>=stealth]
\newtheorem{theorem}{Theorem}
\newtheorem{proposition}[theorem]{Proposition}
\newtheorem{definition}{Definition}
\newtheorem{assumption}{Assumption}
\newcommand{\BibTeX}{B\kern-.05em{\sc i\kern-.025em b}\kern-.08em\TeX}
\begin{document}


\begin{frontmatter}


\paperid{3982} 


\title{Understanding Action Effects through Instrumental Empowerment in Multi-Agent Reinforcement Learning}


\author[A]{\fnms{Ardian}~\snm{Selmonaj}}
\author[A]{\fnms{Miroslav}~\snm{Štrupl}}
\author[A]{\fnms{Oleg}~\snm{Szehr}} 
\author[A]{\fnms{Alessandro}~\snm{Antonucci}} 

\address[A]{Istituto Dalle Molle di Studi sull'Intelligenza Artificiale (IDSIA), USI-SUPSI, Lugano, Switzerland}


\begin{abstract}
To reliably deploy \emph{Multi-Agent Reinforcement Learning} (MARL) systems, it is crucial to understand individual agent behaviors. While prior work typically evaluates overall team performance based on explicit reward signals, it is unclear how to infer agent contributions in the absence of any value feedback. In this work, we investigate whether meaningful insights into agent behaviors can be extracted solely by analyzing the policy distribution. Inspired by the phenomenon that intelligent agents tend to pursue convergent instrumental values, we introduce \emph{Intended Cooperation Values} (ICVs), a method based on information-theoretic Shapley values for quantifying each agent’s causal influence on their co-players' instrumental empowerment. Specifically, ICVs measure an agent's action effect on its teammates’ policies by assessing their decision (un)certainty and preference alignment. By analyzing action effects on policies and value functions across cooperative and competitive MARL tasks, our method identifies which agent behaviors are beneficial to team success, either by fostering deterministic decisions or by preserving flexibility for future action choices, while also revealing the extent to which agents adopt similar or diverse strategies. Our proposed method offers novel insights into cooperation dynamics and enhances explainability in MARL systems.
\end{abstract}

\end{frontmatter}

\section{Introduction}\label{sec:intro}

The rise of \emph{Multi-Agent Reinforcement Learning}~(MARL) has led to many real-world problems being reformulated as games. MARL offers the greatest benefits when combined with powerful function approximators, such as deep neural networks. Beyond robust performance, it is crucial to understand these models by extensively testing and analyzing their behavior in simulation environments to ensure safe deployment in real-world systems. This need has driven attention toward \emph{post-hoc} explanation methods, which treat models as black boxes without constraining their complexity. A game-theoretic method in this category is the \emph{Shapley Value}~(SV), which was originally designed to fairly distribute the overall payoff among players in a cooperative game~\citep{shapley_fair_payoff}. It has since been adapted as a feature attribution technique to explain why specific predictions are made in a deep learning model~\citep{shapley_ml_7}.

In MARL, the reward function is fundamental in shaping agent behavior, with Schmidhuber's artificial curiosity~\citep{schmidi_curiosity} inspiring the design of intrinsic rewards to foster cooperation. As the majority of MARL algorithms (see, e.g.,~\citep{marl_benchmarks}) rely on learned value functions to estimate expected rewards in a given state, the SV naturally emerges as a credit assignment method in fully cooperative MARL scenarios. Conversely, in the absence of both value functions and reward signals, a critical question arises of how, and more importantly \emph{which}, credit should be allocated, irrespective of the game type (cooperative or competitive). In this setting of missing explicit value feedback, we investigate whether SVs can be adapted to extract meaningful information about agent behavior on performance, i.e., information that remains consistent with the underlying value function, while assuming access \emph{only} to the policy function. 

To identify a meaningful candidate for credit, we draw on the statement by~\citet{act_many_options}: "I shall act always so as to increase the total number of choices," thereby advocating for individual empowerment. This aligns with the \emph{instrumental convergence thesis}~\citep{superintelligence_instr_conv}, which posits that several instrumental values generally support task success and are thus pursued by many intelligent agents. Given theoretical~\citep{instr_conv_proof2_power_seeking} and empirical~\citep{altruism_paper, empowerment_maxreward} evidence that access to and control over a wide range of future states (choices) serves as convergently instrumental objective during \emph{training}, we question whether the same holds true during \emph{inference} in multi-agent settings. Concretely, is providing agents with more choices truly beneficial for task success?

\begin{figure}[htb!]
    \centering
    \hspace{0.2cm}
    \begin{subfigure}{0.4\columnwidth}
        \centering
        \includegraphics[width=\linewidth]{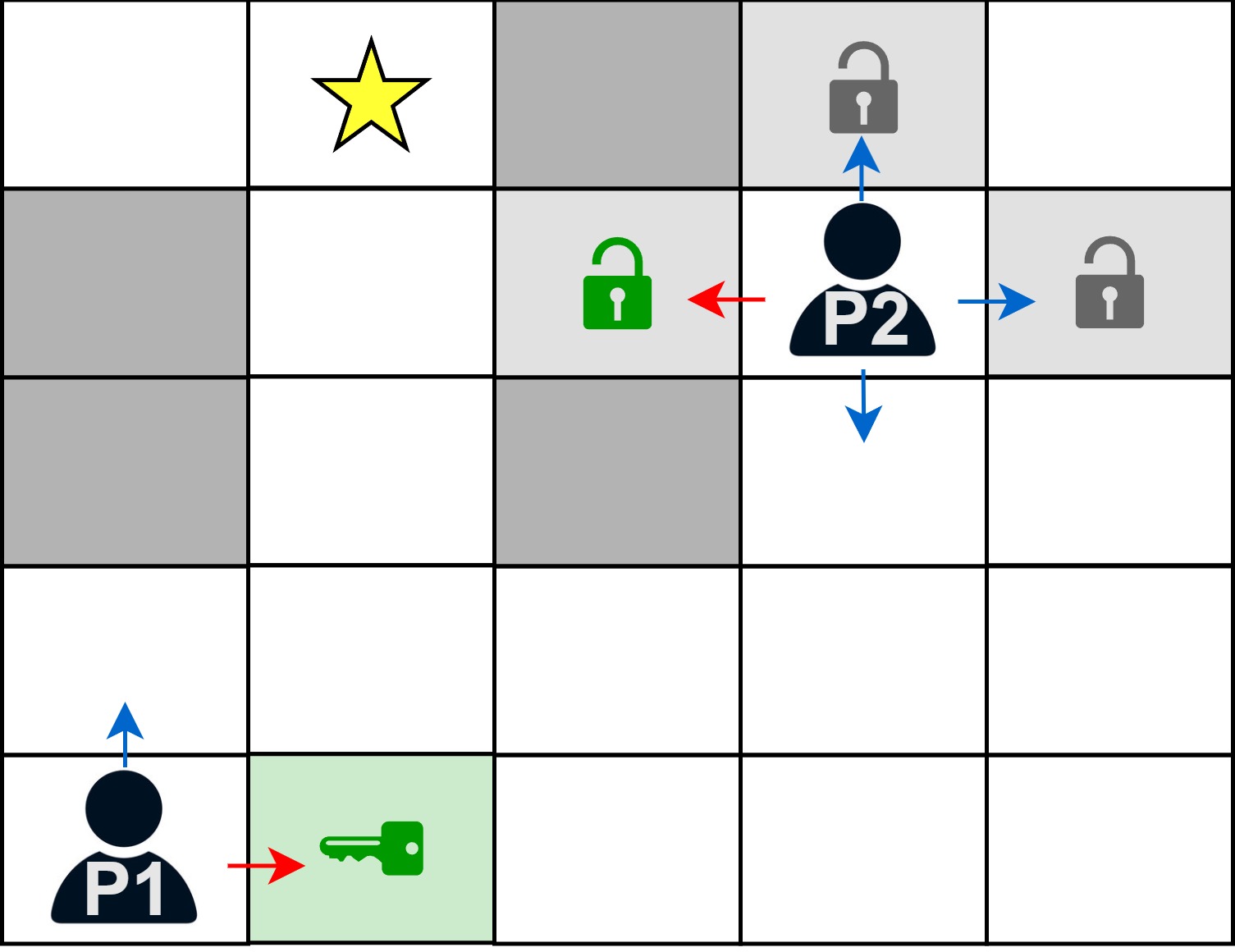}
    \end{subfigure}
    \hspace{0.3cm}
    \begin{subfigure}{0.5\columnwidth}
        \centering
        \begin{tikzpicture}
            \begin{axis}[
                width=\columnwidth+0.3cm,
                height=2.35cm,
                enlarge x limits=0.2,
                xtick style={/pgfplots/major tick length=0pt},
                xtick={1, 2, 3, 4},
                xticklabels={up, down, left, right},
                ybar,
                bar width=3.5pt,
                ymin=0, ymax=1,
                ytick distance=1,
                ticklabel style={font=\footnotesize},
                xticklabel style={yshift=4pt},
            ] 

            \addplot[fill=NavyBlue , draw= NavyBlue]
                coordinates {(1,0.1) (2,0.55) (3,0.3) (4, 0.05)};
            
            \end{axis}
        \end{tikzpicture}
        \begin{tikzpicture}
            \begin{axis}[
                width=\columnwidth+0.3cm,
                height=2.35cm,
                enlarge x limits=0.2,
                xtick style={/pgfplots/major tick length=0pt},
                xtick={1, 2, 3, 4},
                xticklabels={up, down, left, right},
                ybar,
                bar width=3.5pt,
                ymin=0, ymax=1,
                ytick distance=1,
                ticklabel style={font=\footnotesize},
                xticklabel style={yshift=4pt},
            ] 

            \addplot[fill=NavyBlue , draw= NavyBlue]
                coordinates {(1,0.05) (2,0.1) (3,0.8) (4, 0.05)};
            
            \end{axis}
        \end{tikzpicture}
        \vspace{-0.1cm}
    \end{subfigure}
\vspace{0.2cm}
\caption{Left: a toy environment where both players must reach the target (star) to earn a shared reward, while gray cells block the path. Blue and red arrows indicate potential moves (choices). Right: probabilities (y-axis) over actions (x-axis) of P2 before (top) and after (bottom) P1 steps on the key cell and opens the green lock.}
\label{fig:toy_env_concept}
\vspace{0.7cm}
\end{figure}

Contrary to the above findings, we argue that \emph{exploiting certain situations with high certainty} may be more instrumentally valuable than preserving many choices. We illustrate this concept in Fig.~\ref{fig:toy_env_concept}, where two agents must reach a target (star) to receive a shared reward of one. The histograms on the right depict the probability distributions over movement actions for P2 before (top) and after (bottom) P1 acted. The closed green lock cell makes P2 more likely to move downward, while still assigning some probability to the leftward action in anticipation that P1 might open the lock. Consider the case where the key opens the green lock only and P1 steps on it. This action causally affects P2’s action space by unlocking the left path. As a result, P2 not only gets more movement options, but also becomes more certain in exploiting the open path by assigning more weight toward the left action. This highlights how P2's (increased) success is directly tied to P1’s altruistic behavior, which would correspond to an increased value under an optimal value function. In contrast, if the key were to unlock the gray locks, it would provide additional movement options that are clearly not beneficial for achieving its goal. Even if the game involves simultaneous decision-making and P2 chooses to go down, P1 would merit credit for his \emph{intention} to open the path and contribute to P2’s success. In addition, we examine the impact on preference alignment between agents. To do so, we perform a state exchange intervention by placing P1 in P2’s position and v.v. to assess whether agents exhibit coordinated consensus through mutual anticipation of similar strategies. Credit is attributed to actions that enhance alignment and promote consensus.

Motivated by this, we introduce an action attribution method called \emph{Intended Cooperation Value}~(ICV), which quantifies how cooperatively an agent behaves by measuring the causal effect of its actions on the instrumental empowerment of its teammates, using only access to their policies. By adapting causal, information-theoretic SVs, ICV uses the policy entropy to assign credit to an agent based on its intended contribution to reducing decision uncertainty and fostering preference alignment among co-players. Empirical evaluations across different settings demonstrate that ICV reliably attributes credit to behaviors that increase the likelihood of task success. Our approach provides a deeper understanding of agent interactions, particularly in scenarios where success is not explicitly measurable.

\section{Related Work}\label{sec:rel_work}

There has been considerable work on understanding and explaining \emph{Reinforcement Learning}~(RL), primarily aimed at answering \emph{why} specific actions were taken. Existing approaches include replacing the policy network with transparent models, such as decision trees~\citep{dec-tree2}, or incorporating auxiliary reasoning mechanisms like counterfactual analysis~\citep{exp_counterfact}, structural causal models~\citep{exp_causal_RL}, or interpretable reward components~\citep{exp_rew_components}. The SV has been used as a feature attribution method in \emph{Machine Learning}~(ML)~\citep{shapley_ml3, shapley_ml4}, in single-agent RL~\citep{shapley_rl1, shapley_rl2}, and in MARL to assess agent contributions based on rewards~\citep{shapley_marl_importance}. Beyond explanatory purposes, SVs have also been adapted as training mechanisms in MARL, mainly to address the credit assignment problem~\citep{shap_train_marl1, shap_train_marl2, shap_train_marl3}. In parallel, various information-theoretic intrinsic rewards have been proposed to foster multi-agent cooperation~\citep{intr_rew_causal_MARL, intr_rew_diversity}.

So far, no work has examined whether causal influences on agent policies, measured via information-theoretic SVs, correspond to changes in the value function. We are the first to explore if analyzing \emph{solely} the policy provides meaningful insights into an agent’s behavior and its impact on co-players' decisions, and if these insights align with overall team performance to assign a reward-free credit.

\section{Preliminaries}\label{sec:preliminaries}
Let $(\Omega,\mathcal{F},\Pr)$ be a probability space. Let $X:\Omega \rightarrow \mathcal{X}$ be a discrete \emph{Random Variable}~(RV) taking value $x$ in a set $\mathcal{X}$, where $p(x) = \Pr(X = x)$ is the \emph{Probability Mass Function}~(PMF) and $\mathcal{P}(\mathcal{X})$ is the set of probability distributions over $\mathcal{X}$. We restrict our attention to finite sets $\mathcal{X}$. The entropy of $X$ is given by $H(X)=-\sum_{x\in \mathcal{X}} p(x)\log p(x)$. For two RVs $X$ and $Y$, the mutual information is a symmetrical function and defined as $\mathrm{I}(X;Y) = H(X) - H(X \mid Y)=  H(Y) - H(Y \mid X)$. For two PMFs $p$ and $q$, the Kullback–Leibler divergence is $D_{KL}(p||q)=\sum_{x\in \mathcal{X}}p(x) \log \frac{p(x)}{q(x)}$.

\subsection{Multi-Agent Reinforcement Learning}\label{sec:preliminaries_RL}

Interactions among $n \in \mathbb{N}$ concurrently acting agents are typically modeled using either a \emph{Markov Game} (MG)~\citep{markov_game_def} or an \emph{Extensive-Form Game} (EFG)~\citep{ext_form_games}. While in EFGs actions are selected and executed sequentially, in MGs actions are selected simultaneously but their execution may be either parallel or sequential, depending on whether the environment supports parallelization. If not, actions are executed based on a predefined or randomized agent ordering.

\begin{assumption}
Without loss of generality, we focus on MGs, as simultaneous decision-making is commonly used in MARL systems~\citep{marl_book}.
\label{ass:mg}
\end{assumption}

\begin{definition} A Markov Game is defined by the tuple
    $$\mathcal{G}_{m} \coloneqq (\mathcal{N}, \mathcal{S}, \{\mathcal{A}^i\}_{i=1}^n, P, \rho, \{R^i\}_{i=1}^n, \gamma),$$
where $\mathcal{N} = \{1, \dots, n\}$, with $n\in \mathbb{N}$, is the set of agents, $\mathcal{S}$ is the set of admissible states, $\mathcal{A}^i$ is the set of actions available to agent $i \in \mathcal{N}$, with $\boldsymbol{\mathcal{A}}=\prod_{i=1}^n \mathcal{A}^i$ being the joint action set, $P: \mathcal{S} \times \boldsymbol{\mathcal{A}} \to \mathcal{P}(\mathcal{S})$ is the state transition probability kernel, $\rho \in \mathcal{P}(\mathcal{S})$ is the initial state distribution, $R^i: \mathcal{S} \times \boldsymbol{\mathcal{A}} \times \mathcal{S} \to \mathbb{R}$ is the reward function of agent $i$, and $\gamma \in [0, 1)$ is the discount factor controlling the effect of future rewards. At each time step $t \in \mathbb{N}$, let the RV $S_t$ describe the state of the MG, and the RV $A_t^i$ denote the action taken by agent $i$. Starting from an initial state $s_0 \sim \rho$, all agents observe the current state $s_t \in \mathcal{S}$ and simultaneously select an action $a_t^i \in \mathcal{A}^i$ according to their individual policies $\pi^i(a \mid s_t)$, forming a joint action $\boldsymbol{a}_t = (a_t^1, \dots, a_t^n) \in \boldsymbol{\mathcal{A}}$ drawn from the joint policy $\boldsymbol{\pi}( \boldsymbol{a}\mid s_t) = \prod_{i=1}^n \pi^i(a\mid s_t)$. The MG then transitions to a new state $s_{t+1}$ according to $P(s_{t+1} \mid s_t, \boldsymbol{a}_t)$, and each agent $i$ receives a reward $R^i(s_t, \boldsymbol{a}_t, s_{t+1})$. The performance of each agent $i$ can be measured by means of a value function defined as:
\begin{equation}
    V^{i}(s) \coloneqq \mathbb{E}_{\boldsymbol{a}_t \sim \boldsymbol{\pi}, s_{t+1}\sim P} \left[ \sum_{t=0}^{\infty} \gamma^t R^i(s_t, \boldsymbol{a}_t, s_{t+1})  \mid s_0 = s\right].
    \label{eq:value_function}
\end{equation}
\label{def:mg}
\end{definition}

\begin{assumption}
We consider pre-trained \emph{Actor-Critic}~\citep{actor_critic} models under the MG framework using \emph{Centralized Training with Decentralized Execution} (CTDE) \citep{ctde}. Further, we consider stochastic policies, which are generally more robust in real-world-like environments compared to deterministic policies~\citep{robust_policies}.
\label{ass:ctde}
\end{assumption}

\begin{definition}
Multi-agent systems (game types) can be categorized as~\citep{schelling_games}: 
\begin{enumerate*}[label=\textit{\roman*)}]
    \item \emph{Cooperative}: all agents $i\in \mathcal{N}$ share a global reward $R=R^i$.
    \item \emph{Competitive}: $\sum_{i=1}^n R^i=0$ for any state transition (zero-sum MG).
    \item \emph{Mixed-motive}: having individual rewards that induce cooperative and competitive motivations.
\end{enumerate*}
\label{def:game_categories}
\end{definition}

Assumption~\ref{ass:mg} implies that we consider domains with full state observability, though our approach can trivially be extended to partial observability and sequential decisions. Assumption~\ref{ass:ctde} implies that agents act with individual policies (actors), and in fully cooperative settings, they share a global value function $V(s)$, also called centralized critic~\citep{ctde_overview}. From Definition~\ref{def:game_categories}, we emphasize the flexibility of MGs to represent all game types~\citep{cooperation_review}. We use $\boldsymbol{-i}$ to denote the set of all agent indices excluding agent $i$. We further note that the terms environment, game, and MG are used interchangeably.

\subsection{Shapley Value}\label{sec:shapley_value}
We build on prior work applying SVs for feature attribution in single‐agent RL and ML models. Thus, let $Z$ represent the input RV.

\begin{definition}
    Let $\mathcal{M} = \{1,\dots,m\}$ be the set of features and $\mathcal{Z}=\prod_{i=1}^{m} \mathcal{Z}_i$ the input space, so that any input $z$ can be represented as an ordered set $z=\{z_i \mid z_i \in \mathcal{Z}_i\}_{i=1}^{m}$, with $p(z)$ being the data distribution. Then, any subset $\mathcal{K} \subseteq \mathcal{M}$ yields a partial observation $z_{\mathcal{K}} = \{z_i\}_{i \in \mathcal{K}}$, where $\bar{\mathcal{K}} = \mathcal{M} \setminus \mathcal{K}$ is its complement.
    \label{def:input_features}
\end{definition}

\begin{definition}
    The Shapley Value~\citep{shapley_original} is the unique method that satisfies the desirable properties of efficiency, symmetry, additivity, and null player (see Appendix~\ref{app:shapley_axioms}~\cite{fullpaper}).
    It allocates credit by computing the average marginal contribution $\Delta v(\cdot)$ of player (or feature) $i$ to the outcome of a cooperative game (or prediction) over all coalitions $\mathcal{K} \subseteq \mathcal{M}$ using a characteristic function $v: 2^{\mathcal{M}} \rightarrow \mathbb{R}$, as follows:
    $$\phi_i(v) = \sum_{\mathcal{K} \subseteq \mathcal{M} \setminus \{i\}} \frac{|\mathcal{K}|! \, (|\mathcal{M}| - |\mathcal{K}| - 1)!}{|\mathcal{M}|!} \underbrace{ \left[ v(\mathcal{K} \cup \{i\}) - v(\mathcal{K}) \right] }_{\Delta v_i(\cdot)}\,.$$
    \label{def:shapley_def_standard}
\end{definition}

\section{Method}\label{sec:method}
We aim for a method applicable during inference without requiring a separately trained model or modifications to the underlying MARL architecture. We introduce our approach as follows:
\begin{enumerate*}[label=\textit{\roman*)}]
    \item a framework for modeling simultaneous action selection with sequential action execution (Sec.~\ref{sec:game_model}),
    \item an adaptation of SVs to isolate an agent’s causal effect and measure its marginal contribution during gameplay (Sec.~\ref{sec:action_attribution}),
    \item characteristic functions for quantifying the instrumental value of an agent’s action on its co-players (Sec.~\ref{sec:charact_funcs}).
\end{enumerate*}

\subsection{Game Model}\label{sec:game_model}

The ordinary MG framework makes it hard to infer individual action effects on others. For instance, in Fig.~\ref{fig:toy_env_concept}, imagine an agent P3 is positioned below the green lock instead of the wall and P3 could also step onto the lock cell when it gets opened by P1. By analyzing only $S_t$ and $S_{t+1}$, one cannot determine individual contributions of P1 and P3 on P2’s decision-making. To overcome this limitation, we propose a new game model, the \emph{Sequential Value Markov Game}~(SVMG).
\begin{definition}
    An SVMG is defined by the tuple
    $$\mathcal{G}_{s} \coloneqq (\mathcal{N}, \mathcal{S}, \{\mathcal{A}^i\}_{i=1}^n, \{P^i\}_{i=1}^n, \mathcal{D}, \rho),$$
where $\mathcal{N}, \mathcal{S}, \{\mathcal{A}^i\}_{i=1}^n$, and $\rho$ are the same variables as for $\mathcal{G}_m$ in Def.~\ref{def:mg}. A transition kernel $P^i:\mathcal{S} \times \mathcal{A}^i \rightarrow \mathcal{P}(\mathcal{S})$ considers only one agent action. $\mathcal{D} \in \mathcal{P}(\Sigma)$ specifies a distribution over the set of all $n!$ permutations $\Sigma \coloneqq \{\sigma: \mathcal{N} \hookrightarrow \mathcal{N}\}$.
\label{def:svmg}
\end{definition}

Consider Fig.~\ref{fig:MG_to_SVMG} to better elaborate on the SVMG framework.
Without loss of generality we assume that $\mathcal{S}  = \prod_{i=1}^n \mathcal{S}_i$ so that any state RV $S:\Omega \rightarrow \mathcal{S}$ can be decomposed into states of individual agents $S=(S^1,S^2,\ldots,S^n)$, where $S^i:\Omega\rightarrow \mathcal{S}^i$.
Then, actions are executed sequentially in $\mathcal{G}_s$ according to an ordering $\sigma \sim \mathcal{D}$, where $\mathcal{D}$ is either uniform or deterministic.
While $i\in \mathcal{N}$ is an agent index, let $k\in \mathcal{N}$ be a sub-step index iterating over $\sigma$, where $\sigma(k)$ denotes the agent assigned to the $k$-th position in the ordering. Thus, agent $i$ acts at sub-step $k$ if and only if $i=\sigma(k)$.
\begin{definition}  
    We define \emph{intermediate} state RVs $S_{t,(k)}$ and $S_t$ for $t$ and $k$ in the following way. Let $S_0 = S_{0,(0)} \sim \rho$. For each $t$, we construct $S_{t,(k)}$ such that $(S_{t,(k)}|S_{t,(k-1)}, A_t^{\sigma(k)})\sim P^{\sigma(k)}$, and $S_{t+1} = S_{t,(n)}$. Each  kernel $P^i$ has the property that it updates only the state of player $i$ and the states of other agents stay fixed, i.e.,  $P^i(s_k|s_{k-1},a^i) = 0$ for all $j\neq i$ when $s_k^j \neq s_{k-1}^j$. At each decision step $t$, a new order $\sigma$ is sampled $iid$ from $\mathcal{D}$.
    \label{def:inter_state}
\end{definition}

\begin{figure}[htb!]
    \centering
    \begin{tikzpicture}[scale=1]

        \node[draw,circle,minimum size=0.9cm,inner sep=0pt, fill=SteelBlue!70] (s1) at (0.5,5.4) {$S_t$};
        \node[draw,circle,minimum size=0.9cm,inner sep=0pt, fill=SteelBlue!70] (s2) at (7.6,5.4) {$S_{t+1}$};
        \node[draw,circle,minimum size=0.9cm,inner sep=0pt, fill=Dandelion!70] (a00) at (4.025,5.4) {$\boldsymbol{A}_{t}$};
        
        \node[draw,circle,minimum size=0.9cm,inner sep=0pt, fill=SteelBlue!40] (z0) at (0.5,4) {$S_{t,(0)}$};
        \node[draw,circle,minimum size=0.9cm,inner sep=0pt, fill=SteelBlue!40] (z1) at (2.8,4) {$S_{t, (1)}$};
        \node[draw,circle,minimum size=0.9cm,inner sep=0pt, fill=SteelBlue!40] (z2) at (5.2,4) {$S_{t, (2)}$};
        \node[draw,circle,minimum size=0.9cm,inner sep=0pt, fill=SteelBlue!40] (z3) at (7.6,4) {$S_{t, (3)}$};
        \node[draw,circle,minimum size=0.9cm,inner sep=0pt, fill=Dandelion!40] (a1) at (1.65,4) {$A_t^{\sigma(1)}$};
        \node[draw,circle,minimum size=0.9cm,inner sep=0pt, fill=Dandelion!40] (a2) at (4,4) {$A_t^{\sigma(2)}$};
        \node[draw,circle,minimum size=0.9cm,inner sep=0pt, fill=Dandelion!40] (a3) at (6.4,4) {$A_t^{\sigma(3)}$};
        
        \draw[a] (z0) -- (a1);
        \draw[a] (a1) -- (z1);
        \draw[a, color=gray!50] (z1) -- (a2);
        \draw[a] (a2) -- (z2);
        \draw[a, color=gray!50] (z2) -- (a3);
        \draw[a] (a3) -- (z3);
        \draw[a] (z0) to[bend right=50] (z1);
        \draw[a] (z1) to[bend right=50] (z2);
        \draw[a] (z2) to[bend right=50] (z3);
        \draw[<->, line width=1.2pt, color=BrickRed] (s1) -- (z0);
        \draw[<->, line width=1.2pt, color=BrickRed] (s2) -- (z3);
        \draw[a] (s1) -- (a00);
        \draw[a] (a00) -- (s2);
        \draw[a] (s1) to[bend left=15] (s2);
        \draw[dashed, <->] (a00) -- (a1);
        \draw[dashed, <->] (a00) -- (a2);
        \draw[dashed, <->] (a00) -- (a3);
        
        \draw[dashed, <->] (z0) -- (0.5,3);
        \draw[dashed, <->] (z1) -- (2.8,3);
        \draw[dashed, <->] (z2) -- (5.2,3);
        \draw[dashed, <->] (z3) -- (7.6,3);

        \draw[fill=SteelBlue!40] (0,0) rectangle (1,3);
        \node[draw,circle,minimum size=0.9cm,inner sep=0pt, fill=SteelBlue!20] (z11) at (0.5,2.5) {$S^{\sigma(1)}_{t}$};
        \node[draw,circle,minimum size=0.9cm,inner sep=0pt, fill=SteelBlue!20] (z12) at (0.5,1.5) {$S^{\sigma(2)}_{t}$};
        \node[draw,circle,minimum size=0.9cm,inner sep=0pt, fill=SteelBlue!20] (z13) at (0.5,0.5) {$S^{\sigma(3)}_{t}$};


        \draw[fill=SteelBlue!40] (2.3,0) rectangle (3.3,3);
        
        \node[draw,circle,minimum size=0.9cm,inner sep=0pt, fill=SteelBlue!20] (z21) at (2.8,2.5) {$S^{\sigma(1)}_{t+1}$};
        \node[draw,circle,minimum size=0.9cm,inner sep=0pt, fill=SteelBlue!20] (z22) at (2.8,1.5) {$S^{\sigma(2)}_{t}$};
        \node[draw,circle,minimum size=0.9cm,inner sep=0pt, fill=SteelBlue!20] (z23) at (2.8,0.5) {$S^{\sigma(3)}_{t}$};

        \draw[fill=SteelBlue!40] (4.7,0) rectangle (5.7,3);

        \node[draw,circle,minimum size=0.9cm,inner sep=0pt, fill=SteelBlue!20] (z31) at (5.2,2.5) {$S^{\sigma(1)}_{t+1}$};
        \node[draw,circle,minimum size=0.9cm,inner sep=0pt, fill=SteelBlue!20] (z32) at (5.2,1.5) {$S^{\sigma(2)}_{t+1}$};
        \node[draw,circle,minimum size=0.9cm,inner sep=0pt, fill=SteelBlue!20] (z33) at (5.2,0.5) {$S^{\sigma(3)}_{t}$};

        \draw[fill=SteelBlue!40] (7.1,0) rectangle (8.1,3);

        \node[draw,circle,minimum size=0.9cm,inner sep=0pt, fill=SteelBlue!20] (z31) at (7.6,2.5) {$S^{\sigma(1)}_{t+1}$};
        \node[draw,circle,minimum size=0.9cm,inner sep=0pt, fill=SteelBlue!20] (z32) at (7.6,1.5) {$S^{\sigma(2)}_{t+1}$};
        \node[draw,circle,minimum size=0.9cm,inner sep=0pt, fill=SteelBlue!20] (z33) at (7.6,0.5) {$S^{\sigma(3)}_{t+1}$};
    \end{tikzpicture}
\vspace{0.3cm}
\caption{Transformation of MG (upper-level, saturated colors) to SVMG (low-level, faded colors) for $n=3$ agents, following an execution order $\sigma$. Red lines between states mean correspondences, while dashed lines mean decomposition into individual components.}
\label{fig:MG_to_SVMG}
\vspace{0.8cm}
\end{figure}
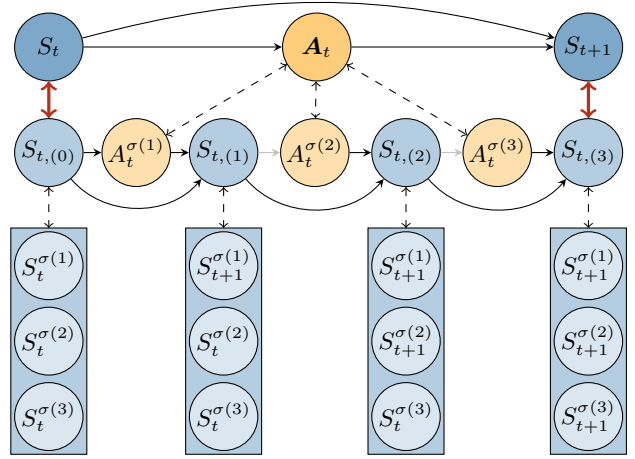

By Definition~\ref{def:inter_state}, agent $i$ got its local state $S_{t,(k)}^i$ is updated at sub-step $k$ if it belongs to the first $k$ actors, i.e.,
\begin{equation}
    S_{t,(k)}^i=
    \begin{cases}
        S_{t+1}^i,& \sigma^{-1}(i) \leq k\\
        S_{t}^i ,&\sigma^{-1}(i) > k
    \end{cases}
    \label{eq:inter_state_udpate_explicit}
\end{equation}

\begin{assumption}
    Def.~\ref{def:inter_state} implies that we assume $\mathcal{G}_s$ to be locally Markovian, i.e., for each step $k$, the next intermediate state $s_{t,(k)}$ depends only on the immediately preceding state $s_{t,(k-1)}$ and the action $a_t^i$ given that $i$ is the currently acting player:
    \begin{align*}
    &\Pr(S_{t,(k)} = s_{t,(k)} \mid \{S_{t, (j)}=s_{t, (j)}\}_{j<k}, \{A_t^j=a_t^j\}_{j=1}^n, \sigma(k)=i) 
    \\& = P^i(s_{t,(k)} \mid s_{t, (k-1)}, a_t^i).
    \end{align*}    
    \label{ass:local_markov}
\end{assumption}

\begin{assumption}
    We assume the underlying MG can be expressed by the SVMG by fulfilling the following conditions:
    \begin{enumerate}[label=\textit{\roman*)}]
    \item The environment possess an internal ordering of agents, according to which actions are executed sequentially.
    \item The joint kernel $P$ of $\mathcal{G}_m$ is decomposable by the per sub-step kernels $P^{\sigma(k)}$ of $\mathcal{G}_s$, such that by marginalizing over all orderings $\sigma \in \Sigma$ and intermediate states $\mathbf{s}=(s_{t,(1)}, \dots, s_{t,(n-1)})$, we recover the joint kernel satisfying $s_{t,(0)} = s_t$ and $s_{t,(n)} = s_{t+1}$:
        \begin{equation}
            P(s_{t+1} \mid s_t, \boldsymbol{a}_t) =  \mathbb{E}_{\sigma} \sum_{\mathbf{s} \in \mathcal{S}} \prod_{k=1}^{n-1} P^{\sigma(k)}\left(s_{t, (k)} \mid s_{t,(k-1)}, a_t^{\sigma(k)}\right)
        \end{equation}
    \end{enumerate}
    \label{ass:mg_as_svmg}
\end{assumption}

While Assumption~\ref{ass:mg_as_svmg} may hold only in simple environments, it enables our framework to be applied \emph{online} during execution. In such cases, intermediate states $S_{t,(k)}$ can be accessed while actions are being processed. In contrast, for more complex games that support parallel execution and thereby violate Assumption~\ref{ass:mg_as_svmg}, our SVMG framework remains applicable \emph{offline}. By storing the global state trajectory $\{S_t\}_{t=0}^{T}$ during inference, the intermediate sequences $\{S_{t,(k)}\}_{k=1}^{n-1}$ can be reconstructed by Eq.~\eqref{eq:inter_state_udpate_explicit} in a post-processing step with $\sigma$ injected externally, i.e., $\sigma \sim \mathcal{D}$.

\paragraph{Causality.} $\mathcal{G}_s$ in Fig.~\ref{fig:MG_to_SVMG} represent causal \emph{Directed Acyclic Graphs} (DAGs), where edges like $S_{t,(k)} \rightarrow A^{\sigma(k+1)}_t$ indicate direct causal relationships. However, for $k>1$, $S_{t,(k)}$ may \emph{not} directly cause $A^{\sigma(k+1)}_t$, since joint actions $\boldsymbol{A}_t$ are pre-determined at $S_t = S_{t,(0)}$, and agents do not react to all intermediate states. To account for this, we perform an \emph{intervention}~\citep{causality_def} by fixing $A^{\sigma(k)}_t$ to $a^{\sigma(k)}_t$ as $\mathrm{do}(A^{\sigma(k)}_t = a^{\sigma(k)}_t)$, independently of its parents. Graphically, this corresponds to removing incoming arcs to $A^{\sigma(k)}_t$, yielding a \emph{surgery} marked by grayish arcs in Fig.~\ref{fig:MG_to_SVMG}. Viewing $\mathcal{G}_s$ as a causal DAG enables post-interventional queries to analyze local inter-agent influences, as similarly done by~\citet{intr_rew_causal_MARL}.

\subsection{Action Attribution}\label{sec:action_attribution}
We present our adaptation of ordinary SVs to isolate an agent’s causal effect and draw inspiration from \emph{Causal Shapley Values}, which have been applied to ML predictions. Based on Def.~\ref{def:input_features} and~\ref{def:shapley_def_standard}, \citet{shapley_causal_intervene} modify the data distribution by intervening on coalition features and propose $v=p(z_{\bar{\mathcal{K}}}\mid \text{do}(Z_{\mathcal{K}}=z_{\mathcal{K}}))$. In contrast, \citet{shapley_causal_asymm} assign nonzero weight $w$ only to causal orderings of $z_{\mathcal{K}}$, allowing feature importance to be distributed selectively. In contrast to $v$ from Def.~\ref{def:shapley_def_standard}, which typically measures the effects on the overall game outcome, we aim for measuring effects of players \emph{during} the game. We extend $v:\Sigma \rightarrow \mathbb{R}$ to consider the current game situation and propose the characteristic function $\nu:\mathcal{S} \times \Sigma \rightarrow \mathbb{R}$, for which concrete definitions are provided in Sec.~\ref{sec:charact_funcs}.

\begin{definition}
    Given an order $\sigma$ and $j \succ_{\sigma} i$ meaning $j$ succeeds $i$ in $\sigma$, we define agent specific coalitions as $\mathcal{C}_{\sigma}^i \coloneqq \{j : j \succ_{\sigma} i\}.$
    \label{def:agent_coalitions}
\end{definition}

\begin{definition}
    Let agent $i=\sigma(k)$ be the acting player. Given an order $\sigma \in \Sigma^i_+$, where $\Sigma^i_+ = \{\sigma \in \Sigma: \mathcal{C}_{\sigma}^i \neq \emptyset\}$, we define the marginal contribution of agent $i$ as the effect on members of coalition $\mathcal{C}_{\sigma}^i$ during action processing steps $k-1$ and $k$, as:
    $$\Delta \nu\left(\mathcal{C}_{\sigma}^i, s_{t, (k)}\right) \coloneqq \nu\left(\mathcal{C}_{\sigma}^i, s_{t,(k)}\right) - \nu\left(\mathcal{C}_{\sigma}^i, s_{t,(k-1)}\right).$$
    \label{def:icv_marginal}
\end{definition}

In Def.~\ref{def:icv_marginal}, the inclusion of agent $i$ is not explicit as in $\Delta v(\cdot)$ from Def.~\ref{def:shapley_def_standard}, i.e., $\mathcal{C}_{\sigma}^i \cup \{i\}$, but occurs through \emph{executing} its action $a_t^i$ and transitioning to $s_{t,(k)} \sim P^i(\cdot \mid s_{t,(k-1)}, \text{do}(A_t^i = a_t^i))$ according to Def.~\ref{def:inter_state}. This serves two purposes: 
\begin{enumerate*}[label=\textit{\roman*)}]
    \item preserving natural game execution while measuring an agent’s effect within the causal chain of $\mathcal{G}_s$,
    \item capturing agent $i$'s \emph{intention} on succeeding agents in $\mathcal{C}_{\sigma}^i$ whose actions have \emph{not yet} been processed to not attribute their impact.
\end{enumerate*}

This is visualized in Fig.~\ref{fig:Shapley_effect_inter_states}, where the affected states $\{S_{t,(\cdot)}^j\}_{j \in \mathcal{C}_{\sigma}^i}$ used for computing $\Delta \nu\left(\mathcal{C}_{\sigma}^i, s_{t,(k)}\right)$ match the processed action $a_t^i$. Accordingly, the only ordering without causal effect is the empty coalition $\mathcal{C}_{\sigma}^i= \emptyset$, which is explicitly excluded in Def.~\ref{def:icv_marginal} by $\Sigma_+^i$. To quantify an agent's average effect during the game, the weight for each $\sigma \in \Sigma_+^i$ is given as $w \coloneqq \mathrm{Pr}(\sigma) = \frac{1}{\lvert \Sigma^i_+ \rvert} = \frac{1}{n!-(n-1)!}$.

\begin{definition}
    We define the \emph{Intended Cooperation Value}~(ICV) to assign credit to agent $i$ based on its average action effect on the members of coalition $\mathcal{C}_{\sigma}^i$ during the time horizon $T$ as:
        $$\Phi_i\left(\nu\right) \coloneqq \frac{1}{T} \sum_{t=0}^{T-1} \sum_{\sigma \in \Sigma_+^i} w \cdot \Delta \nu\left(\mathcal{C}_{\sigma}^i, s_{t, (k)}\right).$$
    \label{def:icv_avg}
\end{definition}

For standard SV computation, the time complexity is $O(2^n)$. Our method increases this by a factor of $n$, since each ordering requires $n$ computations of marginal contributions. Explainability methods like ours are usually applied in simulation to gain insights prior to real-world deployment, where higher computational costs are acceptable. Nevertheless, in practice it remains computationally infeasible to consider all coalitions for large $n$~\citep{shapley_complexity}. As in previous works~\citep{shap_train_marl1}, we use Monte Carlo sampling for orders $\sigma_t$ to get an unbiased estimator $\hat{\Phi}$ for each step $t$ and by assuming $T \gg n$, as follows:
\begin{equation}
    \hat{\Phi}_i\left(\nu\right) \coloneqq \frac{1}{T} \sum_{t=0}^{T-1} \cdot \Delta \nu\left(\mathcal{C}_{\sigma_t}^i, s_{t, (k)}\right), \quad \sigma_t \sim \mathcal{U}_{\Sigma_+^i}.
    \label{eq:icv_avg_mc}
\end{equation}

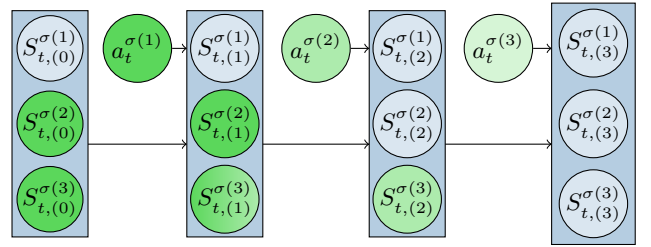
\begin{figure}[htb!]
    \centering
    \begin{tikzpicture}[scale=1]
        \draw[fill=SteelBlue!40] (0,0) rectangle (1,3);
        \node[draw,circle,minimum size=0.8cm,inner sep=0pt, fill=SteelBlue!20] (z11) at (0.5,2.5) {$S^{\sigma(1)}_{t,(0)}$};
        \node[draw,circle,minimum size=0.8cm,inner sep=0pt, fill=LimeGreen!80] (z12) at (0.5,1.5) {$S^{\sigma(2)}_{t,(0)}$};
        \node[draw,circle,minimum size=0.8cm,inner sep=0pt, fill=LimeGreen!80] (z13) at (0.5,0.5) {$S^{\sigma(3)}_{t,(0)}$};

        \node[draw,circle,minimum size=0.9cm,inner sep=0pt, fill=LimeGreen!80] (a1) at (1.65,2.5) {$a^{\sigma(1)}_t$};


        \draw[->] (1,1.23) -- (2.3,1.23);
        \draw[->] (a1) -- (2.3,2.5);

        \draw[fill=SteelBlue!40] (2.3,0) rectangle (3.3,3);
        
        \node[draw,circle,minimum size=0.9cm,inner sep=0pt, fill=SteelBlue!20] (z21) at (2.8,2.5) {$S^{\sigma(1)}_{t,(1)}$};
        \node[draw,circle,minimum size=0.9cm,inner sep=0pt, fill=LimeGreen!80] (z22) at (2.8,1.5) {$S^{\sigma(2)}_{t,(1)}$};
        \node[draw,circle,minimum size=0.9cm,inner sep=0pt, shading=axis, left color=LimeGreen!80, right color=LimeGreen!40, shading angle=90] (z23) at (2.8,0.5) {$S^{\sigma(3)}_{t,(1)}$};

        \node[draw,circle,minimum size=0.9cm,inner sep=0pt, fill=LimeGreen!40] (a2) at (4.0,2.5) {$a_t^{\sigma(2)}$};

        \draw[->] (3.3,1.23) -- (4.7,1.23);
        \draw[->] (a2) -- (4.7,2.5);

        \draw[fill=SteelBlue!40] (4.7,0) rectangle (5.7,3);

        \node[draw,circle,minimum size=0.9cm,inner sep=0pt, fill=SteelBlue!20] (z31) at (5.2,2.5) {$S^{\sigma(1)}_{t,(2)}$};
        \node[draw,circle,minimum size=0.9cm,inner sep=0pt, fill=SteelBlue!20] (z32) at (5.2,1.5) {$S^{\sigma(2)}_{t,(2)}$};
        \node[draw,circle,minimum size=0.9cm,inner sep=0pt, fill=LimeGreen!40] (z33) at (5.2,0.5) {$S^{\sigma(3)}_{t,(2)}$};

        \node[draw,circle,minimum size=0.9cm,inner sep=0pt, fill=LimeGreen!20] (a3) at (6.4,2.5) {$a_t^{\sigma(3)}$};

        \draw[->] (5.7,1.23) -- (7.1,1.23);
        \draw[->] (a3) -- (7.1,2.5);

        \draw[fill=SteelBlue!40] (7.1,-0.1) rectangle (8.2,3.1);

        \node[draw,circle,minimum size=0.9cm,inner sep=0pt, fill=SteelBlue!20] (z31) at (7.65,2.57) {$S^{\sigma(1)}_{t,(3)}$};
        \node[draw,circle,minimum size=0.9cm,inner sep=0pt, fill=SteelBlue!20] (z32) at (7.65,1.5) {$S^{\sigma(2)}_{t,(3)}$};
        \node[draw,circle,minimum size=0.9cm,inner sep=0pt, fill=SteelBlue!20] (z33) at (7.65,0.44) {$S^{\sigma(3)}_{t,(3)}$};
    \end{tikzpicture}
    \vspace{0.2cm}
    \caption{Intermediate states $S_{t,(\cdot)}^{\sigma(\cdot)}$ used for computing the marginal contributions $\Delta \nu$ match the color of the processed action $a_t^{\sigma(k)}$. Notice that $S_{t,(1)}^{\sigma(3)}$ is used for the effects of both $a_t^{\sigma(1)}$ and $a_t^{\sigma(2)}$. Arcs entering $a_t^{\sigma(k)}$ are explicitly ignored due to $\mathrm{do}$-intervention.}
    \vspace{0.5cm}
    \label{fig:Shapley_effect_inter_states}
\end{figure}

\subsection{Characteristic Functions}\label{sec:charact_funcs}

We present various definitions for $\nu$ based on Def.~\ref{def:input_features} and~\ref{def:shapley_def_standard}.

\paragraph{Value-based.} \citet{shapley_rl1} derived the correct application of SVs in single-agent RL by introducing the following characteristic functions, with $p^{\pi}(z'\mid z_{\mathcal{K}})$ as the state occupancy distribution:
\begin{align}
    &v_V(\mathcal{K}) = \sum_{z' \in \mathcal{Z}} p^{\pi}(z'\mid z_{\mathcal{K}}) V(z'),\\
    &v_{\pi}(\mathcal{K}) = \sum_{z' \in \mathcal{Z}} p^{\pi}(z'\mid z_{\mathcal{K}}) \pi(a\mid z').
\end{align}
They showed that SVs with $v_{V}(\mathcal{K})$ capture the contribution of state features to the value function (Eq.~\eqref{eq:value_function}) as a \emph{static predictor}, and with $v_{\pi}(\mathcal{K})$, their influence on the probability of selecting action $a$. In contrast, $\nu(\mathcal{C}_{\sigma}^i, s_{t,(k)})$ does \emph{not} evaluate subsets of features, but full realizations of intermediate states $s_{t,(k)}$, where feature contributions emerge from the effect of an agent’s actions on the state.

\begin{definition}
    The characteristic value-based function is defined as:
        $$\nu_{v}(\mathcal{C}_{\sigma}^i, s_{t,(k)}) \coloneqq \sum_{j\in \mathcal{C}_{\sigma}^i} V^j(s_{t,(k)}),$$
    with $V^j(\cdot)$ being defined in Eq.~\eqref{eq:value_function}. This reduces to $\nu_{v}(\mathcal{C}_{\sigma}^i, s_{t,(k)}) = V(s_{t,(k)})$ in the case of centralized critics.
    \label{def:v_v}
\end{definition}

\paragraph{Entropy-based.} As noted in Sec.\ref{sec:intro}, access to many future states is instrumentally valuable. The size of future accessible states can be quantified by $\lvert\mathcal{S}_+\rvert$, where $\mathcal{S}_+ = \{s_{t+1} \in \mathcal{S} : P(s_{t+1} \mid s_t) > 0\}$. Based on this, \citet{altruism_paper} empirically validated the use of $H(A_t \mid s_t)$ as a proxy for $\lvert \mathcal{S}_+ \rvert$ to capture an agent’s future choices. Similarly, \citet{shapley_uncertainty} introduced information-theoretic SVs via $v = H(Y \mid z_{\mathcal{K}})$ to measure predictive uncertainty in ML, with $Y$ being the output. Building on both, we argue that in some multi-agent settings, it may be more valuable for an agent to be \emph{certain} about what to do, i.e., committed to a specific strategy. 

\begin{definition}
    Since $H(A_{t,(k+1)}^i \mid s_{t,(k)}) \leq \log|\mathcal{A}^i|$~\cite{entropy_log} quantifies the spread of $\pi^i(\cdot \mid s_{t,(k)})$, we define the complement to this bound as a measure of decision certainty, denoted as \emph{peakedness}: 
    $$\mathcal{H}(A_{t,(k+1)}^j) \coloneqq \log\lvert \mathcal{A}^j \rvert - H(A_{t,(k+1)}^j \mid s_{t,(k)}).$$
    The characteristic peak-based function is then given as:
        $$\nu_p(\mathcal{C}_{\sigma}^i, s_{t,(k)}) \coloneqq \sum_{j\in \mathcal{C}_{\sigma}^i} \mathcal{H}\left( A_{t,(k+1)}^j \right).$$
    \label{def:v_p}
\end{definition}

\paragraph{Consensus-based.} While the policy distribution may have the same shape even if the agent chooses a different action, $H(A_t)$ does not capture such changes. To address this, and to assess whether agents act in line with expectations to others, we use the \emph{Jensen–Shannon Divergence}~(JSD), which has been used to encourage diversity among agents~\citep{intr_rew_diversity}. For two PMFs $p$ and $q$ with $u = \frac{1}{2}(p + q)$, the JSD is a symmetrized version of $D_{KL}$:
\begin{equation}
    \bar{\mathcal{J}} (p \mid \mid q) \coloneqq \mathrm{JSD}(p \mid \mid q) =  \frac{1}{2} D_{KL}(p \mid \mid u) + \frac{1}{2} D_{KL}(q \mid \mid u).
    \label{eq:jsd}
\end{equation}

While in the ordinary MG framework (Def.~\ref{def:mg}) each agent $i$ observes the global state $s_t$, in practice agents either learn which components of $S_t$ pertain to them, or the environment assigns agent $i$'s components to a fixed (typically first) position. We denote by $s_t^i$ the value of agent $i$'s component in the global state, and by $s_t^{(i)}$ the agent's \emph{individualized} observation. Based on this, we define a consensus metric to include the cases of:
\begin{enumerate*}[label=\textit{\roman*)}]
\item \emph{others-consensus}, to infer if agent $i$ would act similarly in the state of others and,
\item \emph{self-consensus}, to infer if others would act similarly in the state agent $i$.
\end{enumerate*}

\begin{definition} Using base-2 $\log$, $\bar{\mathcal{J}}(p \| q)$ is bounded in $[0, 1]$, where $0$ indicates identical distributions and $1$ maximal divergence. To instead express similarity, we define $\mathcal{J}(p \| q) \coloneqq 1 - \bar{\mathcal{J}}(p \| q)$. The characteristic consensus-based function is thus given by:
    \begin{align}
        \nu_c(\mathcal{C}_{\sigma}^i, s_{t,(k)}) \coloneqq & \sum_{j\in \mathcal{C}_{\sigma}^i} \bigg[ \overbrace{\mathcal{J} \left(\pi^i\left(\cdot \mid s^{(j)}_{t,(k)}\right) \mid\mid \pi^j\left(\cdot \mid s^{(j)}_{t,(k)}\right)\right)}^{other} \nonumber \\
        & + \underbrace{\mathcal{J} \left(\pi^i\left(\cdot \mid s^{(i)}_{t,(k)}\right) \mid\mid \pi^j\left(\cdot \mid s^{(i)}_{t,(k)}\right)\right)}_{self} \bigg]. \nonumber
    \end{align}
    If only one of the consensus terms is used, we denote \emph{others-consensus} by $\nu_{co}$ and \emph{self-consensus} by $\nu_{cs}$. For dissimilarity, we use $\bar{\mathcal{J}}$ in place of $\mathcal{J}$ in $\nu_c$ and term it $\nu_{d}$, $\nu_{do}$ and $\nu_{ds}$ respectively.
    \label{def:v_c}
\end{definition}

\subsection{Instrumental Empowerment}\label{sec:instr_empower}

For each definition of $\nu$, we link the marginal contribution from Def.~\ref{def:icv_marginal} to important and well-known metrics in MARL. In the following, we use the shorthand $\Delta \nu$ to denote $\Delta \nu (\mathcal{C}_{\sigma}^i, s_{t,(k)})$.

\begin{definition}
The advantage of agent $j$ is given by
$\boldsymbol{\Lambda}^j(s_{t}, \boldsymbol{a}_t) \coloneqq  \mathbb{E}_{s_{t+1}\sim P(\cdot|s_t,\boldsymbol{a}_t)} [R^j(s_t, \boldsymbol{a}_t,s_{t+1}) + \gamma V^j(s_{t+1})] - V^j(s_{t})$. We define the sampled advantage as: 
$$\boldsymbol{\hat{\Lambda}}^j(s_{t}, \boldsymbol{a}_t, s_{t+1}) \coloneqq R^j(s_t, \boldsymbol{a}_t,s_{t+1}) + \gamma V^j(s_{t+1}) - V^j(s_{t}).$$
\label{def:advantage_func}
\end{definition}

\begin{proposition} 
Given $V^j(s)$ as defined by Eq.~\eqref{eq:value_function} that is learned under $\mathcal{G}_m$ dynamics, and employing it as a \emph{static predictor} within $\mathcal{G}_s$, then for each transition of intermediate states $s_{t,(k-1)} \rightarrow s_{t,(k)}$, the value-based marginal contribution $\Delta \nu_v$ measures the undiscounted sampled advantage for players $j \in \mathcal{C}_{\sigma}^i$ attributable to agent $i$:
$$\Delta \nu_{v} = \sum_{j \in \mathcal{C}_{\sigma}^i} \hat{\Lambda}^j(s_{t,(k-1)}, a_t^i, s_{t, (k)}).$$
\end{proposition}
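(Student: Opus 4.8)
The plan is to evaluate $\Delta\nu_v$ straight from its definition and show that each summand collapses, term by term, onto the sampled advantage $\hat{\Lambda}^j$ once the latter is specialized to the sub-step transition $s_{t,(k-1)}\to s_{t,(k)}$ and stripped of its reward and discount contributions. Because the computation is essentially an unfolding of definitions, the genuine content lies in justifying why those two contributions drop out, rather than in any algebra.

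First I would expand the left-hand side. By Def.~\ref{def:icv_marginal} applied to the value-based characteristic function of Def.~\ref{def:v_v}, and using linearity of the finite sum over the fixed coalition $\mathcal{C}_\sigma^i$,
\begin{align*}
\Delta\nu_v &= \nu_v(\mathcal{C}_\sigma^i, s_{t,(k)}) - \nu_v(\mathcal{C}_\sigma^i, s_{t,(k-1)}) \\
&= \sum_{j\in\mathcal{C}_\sigma^i}\bigl[V^j(s_{t,(k)}) - V^j(s_{t,(k-1)})\bigr].
\end{align*}
Here I rely on the static-predictor hypothesis: since $\mathcal{G}_s$ shares the state space $\mathcal{S}$ with $\mathcal{G}_m$ (Def.~\ref{def:svmg}), the function $V^j$ learned under $\mathcal{G}_m$ may be evaluated pointwise at any intermediate realization $s_{t,(k)}\in\mathcal{S}$ without re-deriving it under the $\mathcal{G}_s$ dynamics.

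Next I would specialize the sampled advantage of Def.~\ref{def:advantage_func} to the sub-step transition generated by $\mathrm{do}(A_t^i=a_t^i)$, namely $s_{t,(k)}\sim P^i(\cdot\mid s_{t,(k-1)},a_t^i)$, giving
$$\hat{\Lambda}^j(s_{t,(k-1)}, a_t^i, s_{t,(k)}) = R^j(s_{t,(k-1)}, a_t^i, s_{t,(k)}) + \gamma V^j(s_{t,(k)}) - V^j(s_{t,(k-1)}).$$
The crux is that the two non-value terms vanish. The SVMG tuple (Def.~\ref{def:svmg}) carries no reward function: rewards in the underlying $\mathcal{G}_m$ accrue only once per full decision step $s_t\to s_{t+1}$, never at the granularity of the sub-steps introduced by the ordering $\sigma$, so $R^j(s_{t,(k-1)},a_t^i,s_{t,(k)})=0$. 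Likewise, ``undiscounted'' fixes $\gamma=1$, removing the discount weighting between consecutive intermediate states. With these substitutions each summand reduces to $V^j(s_{t,(k)})-V^j(s_{t,(k-1)})$, matching the expansion above term by term and yielding the claim.

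The main obstacle is interpretive rather than algebraic: one must argue cleanly that no reward is attributable to an intermediate sub-transition and that carrying $V^j$ across the two game models as a fixed predictor is legitimate. I would make the first point precise by appealing to Assumption~\ref{ass:mg_as_svmg}, under which the composite of the sub-step kernels $P^{\sigma(k)}$ reconstructs the single $\mathcal{G}_m$ transition that bears the reward; the intermediate states are thus a bookkeeping device carrying no reward mass of their own. The second point is exactly the static-predictor stipulation of the hypothesis, which I would state explicitly to close the argument.
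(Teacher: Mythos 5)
Your proposal is correct and takes essentially the same route as the paper's own proof: expand $\Delta\nu_v$ via Defs.~\ref{def:icv_marginal} and~\ref{def:v_v}, write the sampled advantage of Def.~\ref{def:advantage_func} for the sub-step transition, and eliminate the reward term (no rewards at intermediate sub-steps of $\mathcal{G}_s$) and the discount (the paper argues no time increment occurs within a decision step, so $\gamma^0=1$, rather than reading ``undiscounted'' as fixing $\gamma=1$). The only differences are cosmetic: the direction of the algebra and your explicit appeal to Assumption~\ref{ass:mg_as_svmg} for the reward-vanishing step, which the paper justifies verbally.
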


\begin{proof}
See Appendix~\ref{app:adv_proof}.
\end{proof}

\begin{definition} 
    The ordinary empowerment function~\citep{empowerment_def} quantifies an agent's maximal potential causal influence on future \emph{states}. We define the \emph{instrumental empowerment} of the agent $i = \sigma(k)$ as the potential influence on future \emph{actions} of agents $j \in \mathcal{C}^i_{\sigma}$ as follows:
    $$\mathcal{E}^j(s_{t,(k-1)}, i) \coloneqq \max_{\pi^i \in (\mathcal{P} (\mathcal{A}^i))^{\mathcal{S}}} \, \mathrm{I}(A_{t,(k)}^j ; a_t^i \mid s_{t,(k-1)}). $$
    \label{def:instr_empowerment}
\end{definition}

\begin{proposition}  
For each intermediate state $s_{t,(k)} \in \mathcal{S}$, the entropy-based marginal contribution $\Delta \nu_{p}$ measures the mutual information $\mathrm{I}(A_{t,(k)}^j ; a_t^i \mid s_{t,(k-1)})$ as the certainty increase in decision-making of players $j \in \mathcal{C}_{\sigma}^i$ attributable to agent $i$'s action $a_t^i$:
    $$\Delta \nu_{p} = \sum_{j\in \mathcal{C}_{\sigma}^i} \mathrm{I}(A_{t,(k)}^j ; a_t^i \mid s_{t,(k-1)}).$$
If moreover the acting policy $\pi^i$ maximizes each conditional mutual information, then $\Delta \nu_{p}$ induces the \emph{instrumental empowerment:}
$$\Delta \nu_{p} =\sum_{j\in \mathcal{C}_{\sigma}^i} \mathcal{E}^j(s_{t,(k-1)}, i).$$
\end{proposition}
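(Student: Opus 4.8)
The plan is to unfold both quantities and reduce the proposition to an elementary conditional-entropy identity. First I would expand $\Delta \nu_p$ through Def.~\ref{def:icv_marginal} and Def.~\ref{def:v_p}, evaluating the peak-based characteristic function at the two consecutive intermediate states $s_{t,(k)}$ and $s_{t,(k-1)}$. Since the peakedness of each $j \in \mathcal{C}_\sigma^i$ carries the same constant offset $\log|\mathcal{A}^j|$ at both states, these offsets cancel termwise and leave
$$\Delta \nu_p = \sum_{j \in \mathcal{C}_\sigma^i} \Big[ H\big(A_{t,(k)}^j \mid s_{t,(k-1)}\big) - H\big(A_{t,(k+1)}^j \mid s_{t,(k)}\big) \Big].$$
The entire first claim then hinges on showing that each bracket equals $\mathrm{I}(A_{t,(k)}^j ; a_t^i \mid s_{t,(k-1)})$, reading (per Def.~\ref{def:inter_state}) $A_{t,(m)}^j$ as agent $j$'s action drawn from $\pi^j(\cdot \mid s_{t,(m-1)})$, i.e.\ the same semantic variable conditioned before and after agent $i$ acts.

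The core step is the conditioning-equivalence lemma. I would invoke the standard decomposition $\mathrm{I}(X;Y\mid Z) = H(X\mid Z) - H(X\mid Y,Z)$ with $X = A^j$, $Z = s_{t,(k-1)}$, and $Y = a_t^i$, and then match the two conditional entropies. The prior term $H(A^j \mid s_{t,(k-1)}) = H(A_{t,(k)}^j \mid s_{t,(k-1)})$ is immediate. The crux is the posterior term: I must argue $H(A^j \mid a_t^i, s_{t,(k-1)}) = H(A_{t,(k+1)}^j \mid s_{t,(k)})$. This is where the SVMG structure does the work. Under the intervention $\mathrm{do}(A_t^i = a_t^i)$ and Assumption~\ref{ass:local_markov}, the successor intermediate state $s_{t,(k)}$ is produced from $(s_{t,(k-1)}, a_t^i)$ by the single-agent kernel $P^i$; and by decentralized execution (Assumption~\ref{ass:ctde}) the policy $\pi^j(\cdot \mid s_{t,(k)})$ is a function of the current state only. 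Hence $A^j$ is conditionally independent of $a_t^i$ given $s_{t,(k)}$, so conditioning on $(s_{t,(k-1)}, a_t^i)$ and on $s_{t,(k)}$ induce the same distribution of $A^j$ and thus the same entropy. Substituting yields $\Delta \nu_p^{(j)} = \mathrm{I}(A_{t,(k)}^j ; a_t^i \mid s_{t,(k-1)})$, and summing over $j \in \mathcal{C}_\sigma^i$ gives the first identity.

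The second claim is then a substitution. By Def.~\ref{def:instr_empowerment}, $\mathcal{E}^j(s_{t,(k-1)}, i)$ is exactly $\max_{\pi^i}\mathrm{I}(A_{t,(k)}^j ; a_t^i \mid s_{t,(k-1)})$, written in the same notation already matched above. Under the added hypothesis that the acting policy $\pi^i$ attains this maximum for every $j$, each per-coalition term of $\Delta \nu_p$ already realizes the maximal conditional mutual information, so $\mathrm{I}(A_{t,(k)}^j ; a_t^i \mid s_{t,(k-1)}) = \mathcal{E}^j(s_{t,(k-1)}, i)$ termwise, and summing over $\mathcal{C}_\sigma^i$ yields $\Delta \nu_p = \sum_{j} \mathcal{E}^j(s_{t,(k-1)}, i)$.

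I expect the main obstacle to be rigor in the conditioning-equivalence step rather than the algebra. Specifically, one must verify that $H(A_{t,(k+1)}^j \mid s_{t,(k)})$, which appears in $\nu_p$ at the later state, genuinely coincides with the interventional posterior entropy $H(A^j \mid s_{t,(k-1)}, \mathrm{do}(a_t^i))$; if $P^i$ is stochastic this requires taking the expectation over $s_{t,(k)} \sim P^i(\cdot \mid s_{t,(k-1)}, a_t^i)$ and confirming that agent $j$'s policy, evaluated at the realized intermediate state, still factorizes as assumed. I would also make explicit that the index shift $k \mapsto k+1$ between the two evaluations of $\nu_p$ is consistent with Def.~\ref{def:inter_state}, so that the \emph{before} and \emph{after} actions of agent $j$ are the same underlying variable under two different conditionings, which is what licenses interpreting their entropy gap as a single conditional mutual information.
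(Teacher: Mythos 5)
Your proof is correct and follows essentially the same route as the paper's: expand $\Delta\nu_p$ via Defs.~\ref{def:icv_marginal} and~\ref{def:v_p}, cancel the $\log\lvert\mathcal{A}^j\rvert$ offsets, identify the resulting entropy gap with the conditional mutual information, and then substitute Def.~\ref{def:instr_empowerment} under the maximizing-policy hypothesis. The only divergence is at the conditioning-equivalence step: the paper handles it by directly \emph{assuming} $\pi^j(\cdot \mid s_{t,(k)}) \approx \pi^j(\cdot \mid s_{t,(k-1)}, a_t^i)$ (justified by excluding communication-based policies), whereas you derive it from the do-intervention, the local Markov kernel $P^i$, and state-only policies --- and the caveat you flag about stochastic $P^i$ (mixture entropy exceeding the expected entropy unless the kernel is effectively deterministic) is precisely the gap that the paper's assumption papers over.
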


\begin{proof}
See Appendix~\ref{app:empower_proof}.
\end{proof}

\begin{definition}
    An agent $i$'s action $a^{i*}$ is a \emph{best response} to actions of co-players $-\boldsymbol{i}$ if it maximizes its payoff $u^i$ given the strategies chosen by the others, formally defined as $a^{i*}=\mathrm{argmax}_{a^i\in \mathcal{A}^i}u^i(a^i, a^{-\boldsymbol{i}})$.
    \label{def:best_resp}
\end{definition}

\begin{proposition}
    At state $s_{t,(k)}$, the action $a_t^i$ increases consensus to co-players $j \in \mathcal{C}_{\sigma}^i$ if $\Delta \nu_j \geq 0$. If furthermore $\Delta \nu_c (\cdot \mid a_t^i) = \max_{a \in \mathcal{A}^i} \Delta \nu_c(\cdot \mid a)$, then $a_t^i$ is a \emph{best response} at $s_{t,(k)}$ in terms of alignment with co-player strategies. 
\end{proposition}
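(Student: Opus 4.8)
The plan is to unfold the definitions and show that both claims follow from the construction of $\nu_c$ as a policy-alignment measure, once the causal role of $a_t^i$ in the intermediate transition is made explicit. Throughout, write $\nu_c^{(j)}(s)$ for the others-plus-self consensus summand between $i$ and $j$ appearing in Definition~\ref{def:v_c}, so that $\nu_c(\mathcal{C}_{\sigma}^i, s) = \sum_{j \in \mathcal{C}_{\sigma}^i} \nu_c^{(j)}(s)$, and let $\Delta \nu_j \coloneqq \nu_c^{(j)}(s_{t,(k)}) - \nu_c^{(j)}(s_{t,(k-1)})$ be the per-coplayer marginal, so that $\Delta \nu_c = \sum_{j \in \mathcal{C}_{\sigma}^i} \Delta \nu_j$.

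First I would establish the first claim. By Definition~\ref{def:v_c}, $\mathcal{J}(p\|q) = 1 - \bar{\mathcal{J}}(p\|q) \in [0,1]$ is a similarity score, equal to $1$ for identical policies and $0$ at maximal divergence, so each term of $\nu_c^{(j)}$ directly measures alignment between $\pi^i$ and $\pi^j$. Consequently, ``$a_t^i$ increases consensus with co-player $j$'' means precisely that this similarity did not decrease across $s_{t,(k-1)} \to s_{t,(k)}$, i.e. $\nu_c^{(j)}(s_{t,(k)}) \ge \nu_c^{(j)}(s_{t,(k-1)})$, which is exactly $\Delta \nu_j \geq 0$. This is a restatement of the semantics of $\mathcal{J}$ together with Definition~\ref{def:icv_marginal}. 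The one substantive point to record is that, by Definition~\ref{def:inter_state}, $P^i$ updates only agent $i$'s own component and freezes the rest; hence the entire change in $\nu_c^{(j)}$ is causally attributable to $a_t^i$, which licenses reading $\Delta \nu_j \ge 0$ as consensus \emph{induced by} agent $i$ rather than by any co-player.

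For the second claim I would first exhibit the consensus as a payoff that is a genuine function of the acting agent's choice. By Definition~\ref{def:inter_state}, $s_{t,(k)} \sim P^i(\cdot \mid s_{t,(k-1)}, \mathrm{do}(A_t^i = a_t^i))$, so with $s_{t,(k-1)}$ and the co-player strategies $(\pi^j)_{j \in \mathcal{C}_{\sigma}^i}$ held fixed, the resulting state --- and therefore $\nu_c(\mathcal{C}_{\sigma}^i, s_{t,(k)})$ --- depends on agent $i$ only through its action. I would then define the alignment payoff $u^i(a) \coloneqq \Delta \nu_c(\mathcal{C}_{\sigma}^i, s_{t,(k)}(a))$, where $s_{t,(k)}(a)$ is the state reached under $\mathrm{do}(A_t^i = a)$. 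Since $\nu_c(\mathcal{C}_{\sigma}^i, s_{t,(k-1)})$ is common to every candidate action, $\mathrm{argmax}_{a} u^i(a) = \mathrm{argmax}_{a} \nu_c(\mathcal{C}_{\sigma}^i, s_{t,(k)}(a))$, so maximizing the marginal consensus gain coincides with maximizing post-transition alignment. The best-response claim is then immediate from Definition~\ref{def:best_resp}: the hypothesis $\Delta \nu_c(\cdot \mid a_t^i) = \max_{a \in \mathcal{A}^i} \Delta \nu_c(\cdot \mid a)$ states precisely that $a_t^i \in \mathrm{argmax}_{a \in \mathcal{A}^i} u^i(a)$.

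The main obstacle --- and the only non-mechanical step --- is justifying that $u^i$ is the right notion of ``payoff in terms of alignment with co-player strategies.'' Definition~\ref{def:best_resp} is phrased against realized co-player actions $a^{-\boldsymbol{i}}$, whereas here the co-players $j \in \mathcal{C}_{\sigma}^i$ have not yet acted in the SVMG ordering, so their strategies must be represented by the fixed policies $\pi^j$ entering $\nu_c$. I would argue that, since agent $i$ acts while anticipating but not observing these choices, the profile $(\pi^j)_{j \in \mathcal{C}_{\sigma}^i}$ is the natural and only available surrogate for $a^{-\boldsymbol{i}}$, so that ``alignment with co-player strategies'' is correctly operationalized by the consensus $\nu_c$. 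A minor care point is the stochastic-kernel case, where $u^i(a)$ should be read as $\mathbb{E}_{s_{t,(k)} \sim P^i(\cdot\mid s_{t,(k-1)},\,\mathrm{do}(a))}[\Delta \nu_c]$; the $\mathrm{argmax}$ identity, and hence the conclusion, is unaffected.
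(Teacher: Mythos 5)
Your proof is correct and takes essentially the same approach as the paper's: the paper's own one-line proof rests on exactly your key step, namely that maximizing $\Delta \nu_c$ over actions is equivalent to maximizing $\nu_c$ (because the baseline term $\nu_c(\mathcal{C}_{\sigma}^i, s_{t,(k-1)})$ is action-independent), after which Definition~\ref{def:best_resp} applies with the consensus serving as the payoff $u^i$. Your write-up simply makes explicit the definitional unfolding of the first claim and the identification of the fixed policies $(\pi^j)_{j \in \mathcal{C}_{\sigma}^i}$ as the co-player strategies, both of which the paper leaves implicit.
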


\begin{proof}
    Follows directly from Def.~\ref{def:best_resp}, as maximizing $\nu_c$ is equivalent to maximizing $\Delta \nu_c$.
\end{proof}

\section{Experiments}\label{sec:experiments}
To assess whether the intuition from Fig.~\ref{fig:toy_env_concept} and our ICV method generalize to more complex settings, we conduct empirical validation in three environments.\footnote{Python code available at \href{https://github.com/IDSIA-papers/ICV_Shapley_MARL}{github.com/icv-marl}.} Each illustrates distinct applications of ICVs \emph{online} and \emph{offline} for extracting behavioral insights and identifying credit assignment in the absence of rewards. We examine whether $\Phi(\nu_p)$ and $\Phi(\nu_c)$ can serve as proxies for $\Phi(\nu_v)$. Thus, our reward-free method can be seen as an extension of the reward-based approaches in~\cite{shapley_rl1, shapley_marl_importance, shapley_rl2}, particularly since the validity of using the value-based function $\Phi(\nu_v)$ as a feature contribution predictor has been proven in~\citep{shapley_rl1}. We compute ICVs over $M$ episodes and report \emph{relative} ICV values by $i)$ normalizing all characteristic functions $\nu$, and $ii)$ dividing each $\Phi_i(\cdot)$ by $\kappa = \max_{i,j} \Phi_i(\nu_j)$ to scale results close to $1$, as some contributions may cancel out and yield low contributions on average. We consider pre-trained models, with training similar to~\cite{marl_benchmarks}, and assume convergence of the learning process but note that our method is applicable at any point during training.

\subsection{Cooperative Game}\label{sec:lbf_exp}
We consider the fully cooperative task in \emph{Level-based Foraging}~(LBF), illustrated in Fig.~\ref{fig:lbf_env}, where the numbering of agents and apples serve as \emph{instance identifiers} (rather than level indicators). All agents must surround the same apple on separate cells and then perform a simultaneous \emph{load} action to receive a reward.

Each agent uses an individual centralized critic, and our ICV method is applied \emph{online}. We first validate the correspondence between value $V^i$ and peak $\mathcal{H}^i$ increase in Fig.~\ref{fig:lbf_env_policy_both}. In the depicted initial state (left), agent 1 moves to the right (green arrow), thereby increasing both its own and its teammates’ certainty in action selection, while also increasing the value for all, which is shown in the right plot. This effect arises because agent 1 positions itself below the apple, therefore clarifying to others where they should move to.

\begin{figure}[htb!]
    \centering
    \begin{subfigure}{0.37\columnwidth}
        \centering
        \includegraphics[width=\linewidth]{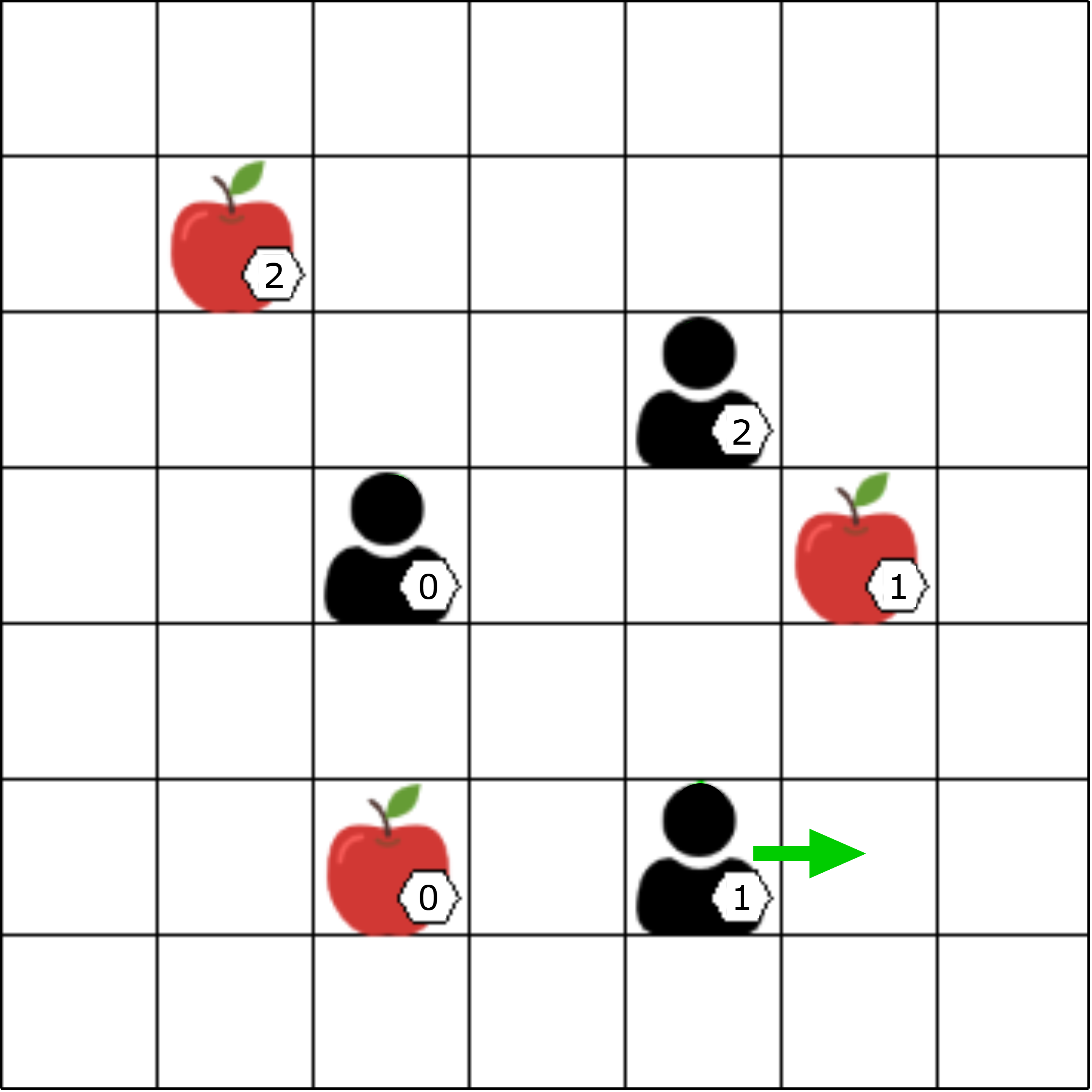}
        \caption{LBF environment.}
        \label{fig:lbf_env}
    \end{subfigure}
    \hspace{0.1cm}
    \begin{subfigure}{0.6\columnwidth}
        \centering
        \vbox{
        \begin{tikzpicture}
            \begin{axis}[
                width=\columnwidth, height=2cm,
                hide axis,
                xmin=0, xmax=0, ymin=0, ymax=0,
                legend columns=6,
                legend style={at={(0.6,0.0)},anchor=center},
            ]
            \addlegendimage{only marks, color=NavyBlue}
            \addlegendentry{{\scriptsize up} \hspace{0.15cm}}
            \addlegendimage{only marks, color=RoyalBlue}
            \addlegendentry{{\scriptsize down} \hspace{0.15cm}}
            \addlegendimage{only marks, color=SteelBlue!80}
            \addlegendentry{{\scriptsize left} \hspace{0.15cm}}
            \addlegendimage{only marks, color=DimGray}
            \addlegendentry{{\scriptsize right} \hspace{0.15cm}}
            \addlegendimage{only marks, color=LightSlateGray!70}
            \addlegendentry{{\scriptsize load} \hspace{0.15cm}}
            \addlegendimage{only marks, color=BrickRed}
            \addlegendentry{{\scriptsize value}}
            \end{axis}
        \end{tikzpicture}
        }
        \vspace{-0.05cm}
        \vbox{
        \begin{tikzpicture}
            \begin{axis}[
                width=\columnwidth+0.5cm,
                height=2.3cm,
                enlarge x limits=0.25,
                xtick style={/pgfplots/major tick length=0pt},
                xtick={1, 2, 3},
                xticklabels={$\pi^0(s_t)$, $\pi^1(s_t)$, $\pi^2(s_t)$},
                ybar,
                bar width=3.5pt,
                ymin=0, ymax=1,
                ytick distance=1,
                ticklabel style={font=\footnotesize},
                xticklabel style={yshift=4pt},
            ] 
            \draw[solid, DarkSlateGray!50] (axis cs:1.5,0) -- (axis cs:1.5,1);
            \draw[solid, DarkSlateGray!50] (axis cs:2.5,0) -- (axis cs:2.5,1);

            \draw[solid, BrickRed] (axis cs:0,0.2) -- (axis cs:1.5,0.2);
            \draw[solid, BrickRed] (axis cs:1.5,0.3) -- (axis cs:2.5,0.3);
            \draw[solid, BrickRed] (axis cs:2.5,0.25) -- (axis cs:3.5,0.25);
            
            \addplot[fill=NavyBlue , draw= NavyBlue]
                coordinates {(1,0.35) (2,0.4) (3,0.1)};
            \addplot[fill=RoyalBlue , draw= RoyalBlue]
                coordinates {(1,0.15) (2,0.05) (3,0.4)};
            \addplot[fill=SteelBlue!80 , draw= SteelBlue!80]
                coordinates {(1,0.05) (2,0.05) (3,0.0)};
            \addplot[fill=DimGray , draw=DimGray]
                coordinates {(1, 0.45) (2, 0.5) (3, 0.5)};
            \addplot[fill=LightSlateGray!70 , draw=LightSlateGray!70]
                coordinates {(1, 0.0) (2, 0.0) (3, 0.0)};
            
            \end{axis}
        \end{tikzpicture}
        }
        \vspace{-0.1cm}
        \vbox{
        \begin{tikzpicture}
            \begin{axis}[
                width=\columnwidth+0.5cm,
                height=2.3cm,
                enlarge x limits=0.25,
                xtick style={/pgfplots/major tick length=0pt},
                xtick={1, 2, 3},
                xticklabels={$\pi^0(s_t)$, $\pi^1(s_t)$, $\pi^2(s_t)$},
                ybar,
                bar width=3.5pt,
                ymin=0, ymax=1,
                ytick distance=1,
                ticklabel style={font=\footnotesize},
                xticklabel style={yshift=4pt},
            ] 
            \draw[solid, DarkSlateGray!50] (axis cs:1.5,0) -- (axis cs:1.5,1);
            \draw[solid, DarkSlateGray!50] (axis cs:2.5,0) -- (axis cs:2.5,1);

            \draw[solid, BrickRed] (axis cs:0,0.27) -- (axis cs:1.5,0.27);
            \draw[solid, BrickRed] (axis cs:1.5,0.39) -- (axis cs:2.5,0.39);
            \draw[solid, BrickRed] (axis cs:2.5,0.33) -- (axis cs:3.5,0.33);

            \draw[solid, BrickRed!30] (axis cs:0,0.2) -- (axis cs:1.5,0.2);
            \draw[solid, BrickRed!30] (axis cs:1.5,0.3) -- (axis cs:2.5,0.3);
            \draw[solid, BrickRed!30] (axis cs:2.5,0.25) -- (axis cs:3.5,0.25);
            
            \addplot[fill=NavyBlue , draw= NavyBlue]
                coordinates {(1,0.1) (2,0.9) (3,0.0)};
            \addplot[fill=RoyalBlue , draw= RoyalBlue]
                coordinates {(1,0.05) (2,0.0) (3,0.35)};
            \addplot[fill=SteelBlue!80 , draw= SteelBlue!80]
                coordinates {(1,0.0) (2,0.0) (3,0.0)};
            \addplot[fill=DimGray , draw=DimGray]
                coordinates {(1, 0.85) (2, 0.05) (3, 0.6)};
            \addplot[fill=LightSlateGray!70 , draw=LightSlateGray!70]
                coordinates {(1, 0.0) (2, 0.05) (3, 0.05)};
            
            \end{axis}
        \end{tikzpicture}
        }
        \vspace{-0.1cm}
        \caption{Policy distributions and actions.}
        \label{fig:lbf_policy_distributions}
    \end{subfigure}
\vspace{0.1cm}
\caption{The effect of agent 1's action (moving right) on the policies and values before (top) and after (bottom) acting.}
\label{fig:lbf_env_policy_both}
\vspace{0.7cm}
\end{figure}

Contrary to the previous, we verify the correspondence between increase in value $V^i(\cdot)$ and uncertainty in decision-making, measured by $H(A_t^i)$ for each agent. Besides the entropy, we count the number of actions $a_t^i$ yielding $\Lambda^i(s_t,\boldsymbol{a}_t^{\boldsymbol{-i}}, a_t^i) \geq 0$ as per Def.~\ref{def:advantage_func}, by keeping the actions $\boldsymbol{a^{-i}}$ fixed and term this as agent $i$'s \emph{choice} $C^i$. We plot the normalized values of $V^i$, entropy $H^i=H(A_t^i)$ and choices $C^i$ in Fig.~\ref{fig:lbf_history_choice}. The plots per agent $i$ indicate a reasonable correspondence between increases in value and decrease of entropy and number of valuable choices, thereby further supporting our intuition.

\usetikzlibrary{fillbetween}
\begin{figure}[htb!]
\centering
\begin{subfigure}{0.3\columnwidth}
    \centering
    \vbox{
    \begin{tikzpicture}
    \hspace{3.1cm}
        \begin{axis}[
            width=\columnwidth, height=2cm,
            hide axis,
            xmin=0, xmax=0, ymin=0, ymax=0,
            legend columns=3,
            legend style={at={(0,0.0)},anchor=center}
        ]
        \addlegendimage{only marks, color=NavyBlue}
        \addlegendentry{$V^i$ \hspace{0.2cm}}
        \addlegendimage{only marks, color=RoyalBlue}
        \addlegendentry{$H^i$ \hspace{0.2cm}}
        \addlegendimage{only marks, color=DimGray}
        \addlegendentry{$C^i$}
        \end{axis}
    \end{tikzpicture}
    }
    \vspace{-0.05cm}
    \vbox{
    \begin{tikzpicture}
      \begin{axis}[
            width=\columnwidth+1.2cm, height=3cm,
            xmin=-1, xmax=15,
            ymin=0.0, ymax=1,
            xtick distance=5,
            ytick distance=1,
            enlargelimits=false,
            ticklabel style={font=\footnotesize},
            every axis plot/.append style={thick},
            grid=none,
        ]
    
        \addplot[
          NavyBlue,
          very thick,
          name path=avg_v0
        ] table [x=x, y=avg_v0] {exp/lbf_history_choice_entropy.dat};
        
        \addplot[
          RoyalBlue,
          very thick,
          name path=avg_h0
        ] table [x=x, y=avg_h0] {exp/lbf_history_choice_entropy.dat};

        \addplot[
          DimGray,
          very thick,
          name path=avg_c0
        ] table [x=x, y=avg_c0] {exp/lbf_history_choice_entropy.dat};
      \end{axis}
    \end{tikzpicture}
    \caption{Agent 0.}
    \label{fig:lbf_history_choice_P0}
    }
\end{subfigure}
\hspace{0.2cm}
\begin{subfigure}{0.3\columnwidth}
    \centering
    \begin{tikzpicture}
      \begin{axis}[
            width=\columnwidth+1.2cm, height=3cm,
            xmin=-1, xmax=15,
            ymin=0.0, ymax=1,
            xtick distance=5,
            ytick distance=1,
            enlargelimits=false,
            ticklabel style={font=\footnotesize},
            every axis plot/.append style={thick},
            grid=none,
        ]
    
        \addplot[
          NavyBlue,
          very thick,
          name path=avg_v1
        ] table [x=x, y=avg_v1] {exp/lbf_history_choice_entropy.dat};
        
        \addplot[
          RoyalBlue,
          very thick,
          name path=avg_h1
        ] table [x=x, y=avg_h1] {exp/lbf_history_choice_entropy.dat};

        \addplot[
          DimGray,
          very thick,
          name path=avg_c1
        ] table [x=x, y=avg_c1] {exp/lbf_history_choice_entropy.dat};
      \end{axis}
    \end{tikzpicture}
    \caption{Agent 1.}
    \label{fig:lbf_history_choice_P1}
\end{subfigure}
\hspace{0.2cm}
\begin{subfigure}{0.3\columnwidth}
    \centering
    \begin{tikzpicture}
      \begin{axis}[
            width=\columnwidth+1.2cm, height=3cm,
            xmin=-1, xmax=15,
            ymin=0.0, ymax=1,
            xtick distance=5,
            ytick distance=1,
            enlargelimits=false,
            ticklabel style={font=\footnotesize},
            every axis plot/.append style={thick},
            grid=none,
        ]
    
        \addplot[
          NavyBlue,
          very thick,
          name path=avg_v2
        ] table [x=x, y=avg_v2] {exp/lbf_history_choice_entropy.dat};
        
        \addplot[
          RoyalBlue,
          very thick,
          name path=avg_h2
        ] table [x=x, y=avg_h2] {exp/lbf_history_choice_entropy.dat};

        \addplot[
          DimGray,
          very thick,
          name path=avg_c2
        ] table [x=x, y=avg_c2] {exp/lbf_history_choice_entropy.dat};
      \end{axis}
    \end{tikzpicture}
    \caption{Agent 2.}
    \label{fig:lbf_history_choice_P2}
\end{subfigure}
\vspace{0.5cm}
\caption{Comparison between normalized values $V^i$, entropy $H^i$ and choices $C^i$ along y-axis over time steps $t$ in x-axis of LBF agents.}
\label{fig:lbf_history_choice}
\vspace{0.7cm}
\end{figure}
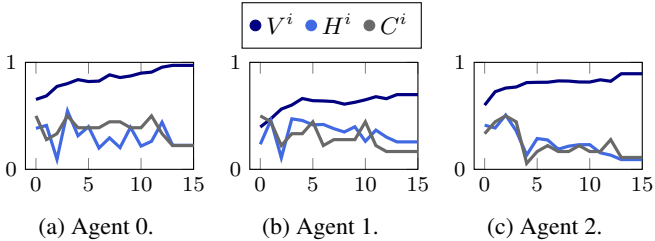

Finally, we plot the sampled ICV values $\hat{\Phi}(\cdot)$ in Fig.~\ref{fig:lbf_results}, averaged over $M=500$ episodes and normalized using $\kappa \approx 0.06$. We additionally report the \emph{changes} of choices $\delta (C^i)$ per agent. Consistent with Fig.~\ref{fig:lbf_history_choice}, the results indicate that agents contribute valuably to their co-players and receive credit by increasing their decision certainty, as also reflected in the reduced number of choices $\delta (C^i)$. We further observe increased consensus ($\hat{\Phi}(\nu_{cs})$) on the acting agents A0 and A1, suggesting that they acted as anticipated by others, whereas A2 followed different strategies despite having identical role. The minor negative effects on $\hat{\Phi}(\nu_{co})$ imply that non-acting players' preferences were not fully satisfied, which mainly affects A0 and A2 in this case.

\begin{figure}[htb!]
    \centering
    \hspace{-0.8cm}
    \begin{subfigure}{0.8\columnwidth}
        \centering
        \begin{tikzpicture}
            \begin{axis}[
                width=\columnwidth+1cm,
                height=3.5cm,
                enlarge x limits=0.25,
                xtick style={/pgfplots/major tick length=0pt},
                xtick={1, 2, 3},
                xticklabels={A0, A1, A2},
                ybar,
                bar width=7pt,
                ymin=-1, ymax=1,
                ytick distance=0.5,
                ticklabel style={font=\footnotesize},
                xticklabel style={yshift=4pt},
            ]
            \draw[solid, DarkSlateGray!50] (axis cs:1.5,-1) -- (axis cs:1.5,1);
            \draw[solid, DarkSlateGray!50] (axis cs:2.5,-1) -- (axis cs:2.5,1);
            
            \addplot[fill=NavyBlue , draw= NavyBlue]
                coordinates {(1,0.48) (2,0.55) (3,0.6)};
            \addplot[fill= RoyalBlue, draw= RoyalBlue]
                coordinates {(1,0.95) (2,0.65) (3,0.55)};
            \addplot[fill=SteelBlue!80 , draw=SteelBlue!80]
                coordinates {(1,-0.87) (2,-0.75) (3, -0.7)};
            \addplot[fill=DimGray, draw=DimGray]
                coordinates {(1, 0.4) (2, 0.3) (3, -0.2)};
            \addplot[fill=LightSlateGray!70, draw=LightSlateGray!70]
                coordinates {(1, -0.3) (2, -0.1) (3, -0.26)};
            \end{axis}
        \end{tikzpicture}
    \end{subfigure}
    \hspace{0.2cm}
    \begin{subfigure}{0.1\columnwidth}
        \centering
            \begin{tikzpicture}
                \begin{axis}[
                    width=2cm, height=2cm,
                    hide axis,
                    xmin=0, xmax=0, ymin=0, ymax=0,
                    legend columns=1,
                    legend style={at={(0.0,0.0)},anchor=center}
                ] 
                \addlegendimage{only marks, color=NavyBlue}
                \addlegendentry{\footnotesize{$\hat{\Phi}(\nu_v)$}}
                \addlegendimage{only marks, color=RoyalBlue}
                \addlegendentry{\footnotesize{$\hat{\Phi}(\nu_p)$}}
                \addlegendimage{only marks, color=SteelBlue!80}
                \addlegendentry{\footnotesize{$\delta (C^i)$}}
                \addlegendimage{only marks, color= DimGray}
                \addlegendentry{\footnotesize{$\hat{\Phi}(\nu_{cs})$}}
                \addlegendimage{only marks, color=LightSlateGray!70}
                \addlegendentry{\footnotesize{$\hat{\Phi}(\nu_{co})$}}
                \end{axis}
            \end{tikzpicture}
            \vspace{0.03cm}
      \end{subfigure}
    \vspace{0.2cm}
    \caption{ICV $\hat{\Phi}(\cdot)$ and choice changes $\delta (C^i)$ on LBF (y-axis) showing the effect of the respective agent on co-players (x-axis).}
    \label{fig:lbf_results}
    \vspace{0.8cm}
\end{figure}
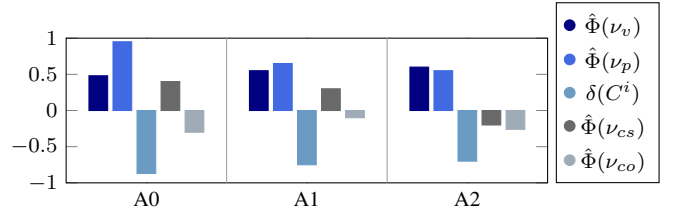

\subsection{Mixed-Motive Game}\label{sec:mpe_exp}
Here, we use the mixed-motive Tag game (Fig.~\ref{fig:mpe_env}) of \emph{Multi-Particle Environment}~(MPE), where predators (red) aim to catch a prey (green). Agents have individual critics, and ICVs are computed \emph{offline}. In Fig.~\ref{fig:mpe_history}, we analyze agent 1 (prey) and again verify the correspondence between its value $V^1$ and action determinism $\mathcal{H}^1$ over a single episode. The results indicate a reasonable alignment between value and action certainty. We further assess agent 1’s dissimilarity to its opponents (predators) by setting $\mathcal{C}_{\sigma}^i$ accordingly and measuring $\nu_{d}$, denoted as $\bar{\mathcal{J}}^1 $. Agent 1 shows high dissimilarity throughout the episode which is consistent with the intuition that a prey should act contrary to predators, ideally avoiding movement toward them.

\begin{figure}[htb!]
\centering
    \begin{subfigure}{0.3\columnwidth}
        \centering
        \includegraphics[width=\linewidth]{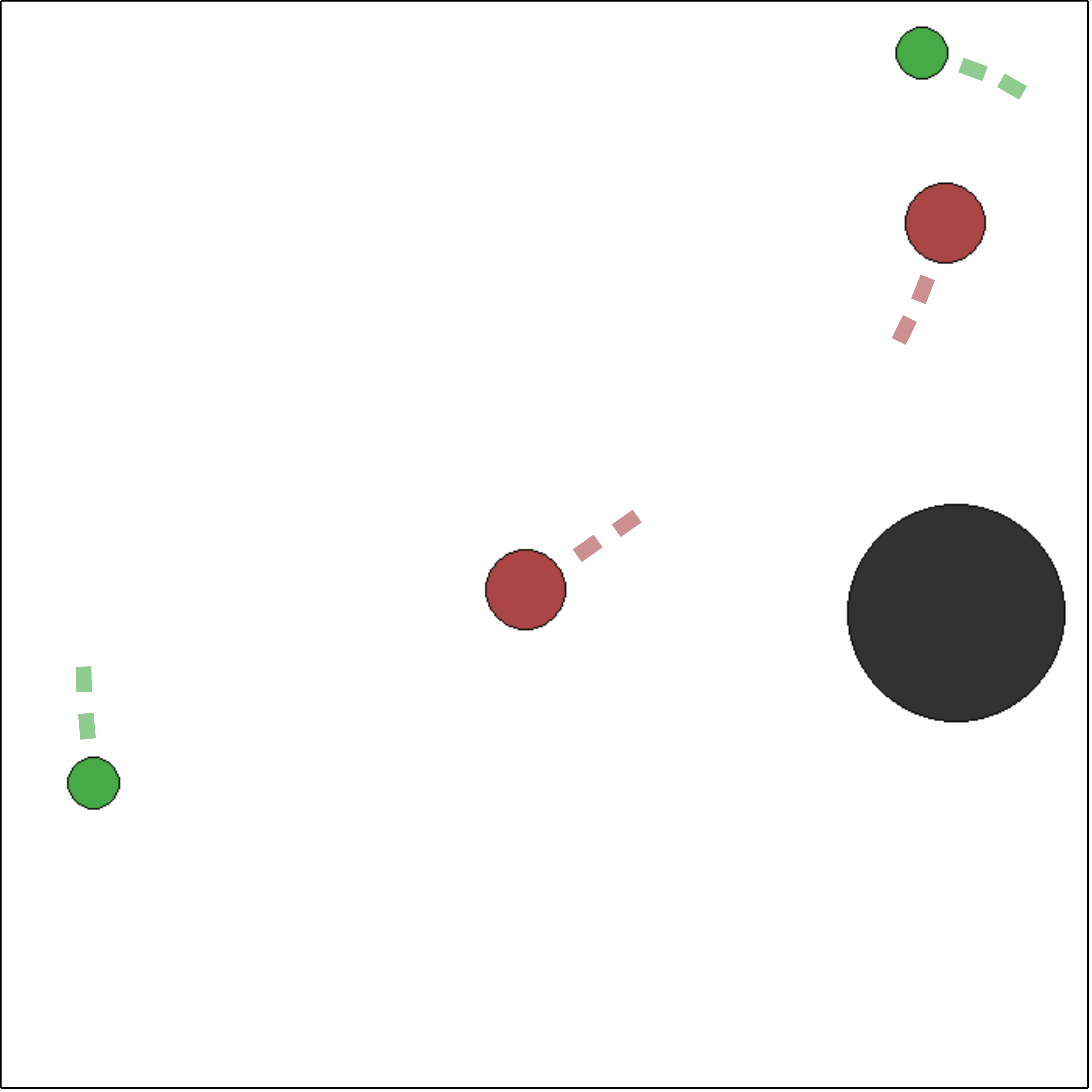}
        \caption{MPE tag.}
        \label{fig:mpe_env}
    \end{subfigure}
    \hspace{0.2cm}
    \begin{subfigure}{0.55\columnwidth}
    \centering
    \begin{tikzpicture}
    \hspace{0.3cm}
        \begin{axis}[
            width=\columnwidth, height=2cm,
            hide axis,
            xmin=0, xmax=0, ymin=0, ymax=0,
            legend columns=3,
            legend style={at={(0,0.0)},anchor=center}
        ]
        \addlegendimage{only marks, color=NavyBlue}
        \addlegendentry{\footnotesize{$V^1$} \hspace{0.2cm}}
        \addlegendimage{only marks, color=RoyalBlue}
        \addlegendentry{\footnotesize{$\mathcal{H}^1$} \hspace{0.2cm}}
        \addlegendimage{only marks, color=DimGray}
        \addlegendentry{\footnotesize{$\bar{\mathcal{J}}^1$}}
        \end{axis}
    \end{tikzpicture}
    \begin{tikzpicture}
      \begin{axis}[
            width=\columnwidth+1cm, height=3cm,
            xmin=-1, xmax=50,
            ymin=-1.0, ymax=1.1,
            xtick distance=10,
            ytick distance=1,
            enlargelimits=false,
            ticklabel style={font=\footnotesize},
            every axis plot/.append style={thick},
            grid=none,
        ]
    
        \addplot[
          NavyBlue,
          very thick,
          name path=avg_v3
        ] table [x=x, y=avg_v3] {exp/mpe_history_peak_jsd.dat};
        
        \addplot[
          RoyalBlue,
          very thick,
          name path=avg_p3
        ] table [x=x, y=avg_p3] {exp/mpe_history_peak_jsd.dat};

        \addplot[
          DimGray,
          very thick,
          name path=avg_j-opp3
        ] table [x=x, y=avg_j-opp3] {exp/mpe_history_peak_jsd.dat};
      \end{axis}
    \end{tikzpicture}
    \caption{Episode history of agent 1.}
    \label{fig:mpe_history}
\end{subfigure}
\vspace{0.5cm}
\caption{Comparing normalized value $V^1$, $\mathcal{H}^1$ and $\bar{\mathcal{J}}^1$ on the right plot along y-axis over time steps $t$ along x-axis of an MPE prey.}
\label{fig:mpe_history_both}
\vspace{0.7cm}
\end{figure}

We measured the ICVs over $M=500$ episodes and normalized them using $\kappa=0.08$. Due to game symmetry, we focus on a single adversary (predator, denoted as \emph{adv}) and a single agent (prey, denoted as \emph{ag}). For both, we partition their coalitions and evaluate the effects on teammates (-t) and opponents (-o), where $\nu_c$ is used for teammates, while dissimilarity $\nu_d$ is applied to opponents. For predators, this means: in adv-t, $\nu_c$ captures similarity among predators and $\nu_d$ measures dissimilarity between predators and prey. In adv-o, $\nu_c$ reflects similarity among prey, and $\nu_d$ their dissimilarity with predators. The same interpretation applies to prey (ag).

Figure~\ref{fig:mpe_results} presents the results, where proximity to others is the main effect. The prey's contributions align with our expectations: average value effects $\hat{\Phi}(\nu_v)$ correspond to action determinism $\hat{\Phi}(\nu_p)$ and alignment with co-player's preferences (ag-t, $\hat{\Phi}(\nu_c)$), but interestingly, they contribute to less diversity toward predators $\hat{\Phi}(\nu_d)$. In contrast, predators benefit from reducing action determinism and keeping their choices open when engaging prey, as indicated by increased $\hat{\Phi}(\nu_c)$ but decreased $\hat{\Phi}(\nu_p)$. They tend to behave slightly different (adv-t, $\hat{\Phi}(\nu_c)$) while also promoting diversity against opponents ($\hat{\Phi}(\nu_d)$). In adv-o, high $\hat{\Phi}(\nu_p)$ and low $\hat{\Phi}(\nu_v)$ indicate prey become more deterministic (e.g., fleeing) as predators approach. Their effect also renders prey reacting differently (adv-o, $\hat{\Phi}(\nu_c)$). Overall, compared to Fig.~\ref{fig:mpe_history}, the ICVs suggest that $\hat{\Phi}(\nu_p)$ largely behaves as expected but may signal both positive and negative contributions.

\begin{figure}[htb!]
    \centering
    \hspace{-0.45cm}
    \begin{subfigure}{0.9\columnwidth}
        \centering
        \begin{tikzpicture}
            \begin{axis}[
                width=\columnwidth+0.3cm,
                height=3.5cm,
                enlarge x limits=0.2,
                xtick style={/pgfplots/major tick length=0pt},
                xtick={1, 2,3,4},
                xticklabels={adv-t, adv-o, ag-t, ag-o},
                ybar,
                bar width=7pt,
                ymin=-1, ymax=1,
                ytick distance=0.5,
                ticklabel style={font=\footnotesize},
                xticklabel style={yshift=4pt},
            ]
            \draw[solid, DarkSlateGray!50] (axis cs:1.5,-1) -- (axis cs:1.5,1);
            \draw[solid, DarkSlateGray!50] (axis cs:2.5,-1) -- (axis cs:2.5,1);
            \draw[solid, DarkSlateGray!50] (axis cs:3.5,-1) -- (axis cs:3.5,1);
            
            \addplot[fill=NavyBlue , draw= NavyBlue]
                coordinates {(1,0.35) (2,-0.95) (3, 0.8) (4, -0.2)};
            \addplot[fill=RoyalBlue , draw= RoyalBlue]
                coordinates {(1,-0.55) (2,0.75) (3, 0.85) (4, -0.08)};
            \addplot[fill=SteelBlue!80, draw=SteelBlue!80]
                coordinates {(1, -0.15) (2,-0.5) (3, 0.6) (4, 0.12)};
            \addplot[fill=DimGray, draw=DimGray]
                coordinates {(1,0.45) (2,0.27) (3, -0.4) (4, -0.35)};
            \end{axis}
        \end{tikzpicture}
    \end{subfigure}
    \hspace{-0.3cm}
    \begin{subfigure}{0.1\columnwidth}
        \centering
            \begin{tikzpicture}
                \begin{axis}[
                    width=2cm, height=2cm,
                    hide axis,
                    xmin=0, xmax=0, ymin=0, ymax=0,
                    legend columns=1,
                    legend style={at={(0,0.0)},anchor=center}
                ]
                \addlegendimage{only marks, color=NavyBlue}
                \addlegendentry{\footnotesize{$\hat{\Phi}(\nu_v)$}}
                \addlegendimage{only marks, color=RoyalBlue}
                \addlegendentry{\footnotesize{$\hat{\Phi}(\nu_p)$}}
                \addlegendimage{only marks, color=SteelBlue!80}
                \addlegendentry{\footnotesize{$\hat{\Phi}(\nu_{c})$}}
                \addlegendimage{only marks, color=DimGray}
                \addlegendentry{\footnotesize{$\hat{\Phi}(\nu_{d})$}}
                \end{axis}
            \end{tikzpicture}
            \vspace{0.05cm}
      \end{subfigure}
    \vspace{0.1cm}
    \caption{ICV $\hat{\Phi}(\cdot)$ on MPE (y-axis) showing the effect of the respective player on team members (-t) or opponents (-o) (x-axis).}
    \label{fig:mpe_results}
\vspace{0.8cm}
\end{figure}
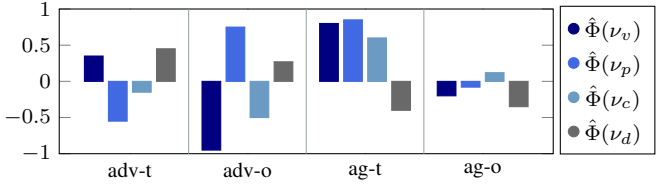

\subsection{Competition Game}\label{exp:grf_exp}
In the competitive setting, we analyze agent behavior in \emph{Google Research Football} (GRF, Fig.~\ref{fig:grf_env})~\citep{grf_env}, a complex game simulating real-world football. We employ TiZero~\citep{tizero}, a strong model, where the policy $\pi_{tz}$ is shared across agents and differentiated only by the agent index, effectively inducing \emph{heterogeneous} roles. A shared centralized critic is used, and ICV computation is performed \emph{offline}.

\begin{figure}[htb!]
    \centering
    \begin{subfigure}{0.49\columnwidth}
    \centering
        \includegraphics[width=\columnwidth-0.9cm,height=2cm]{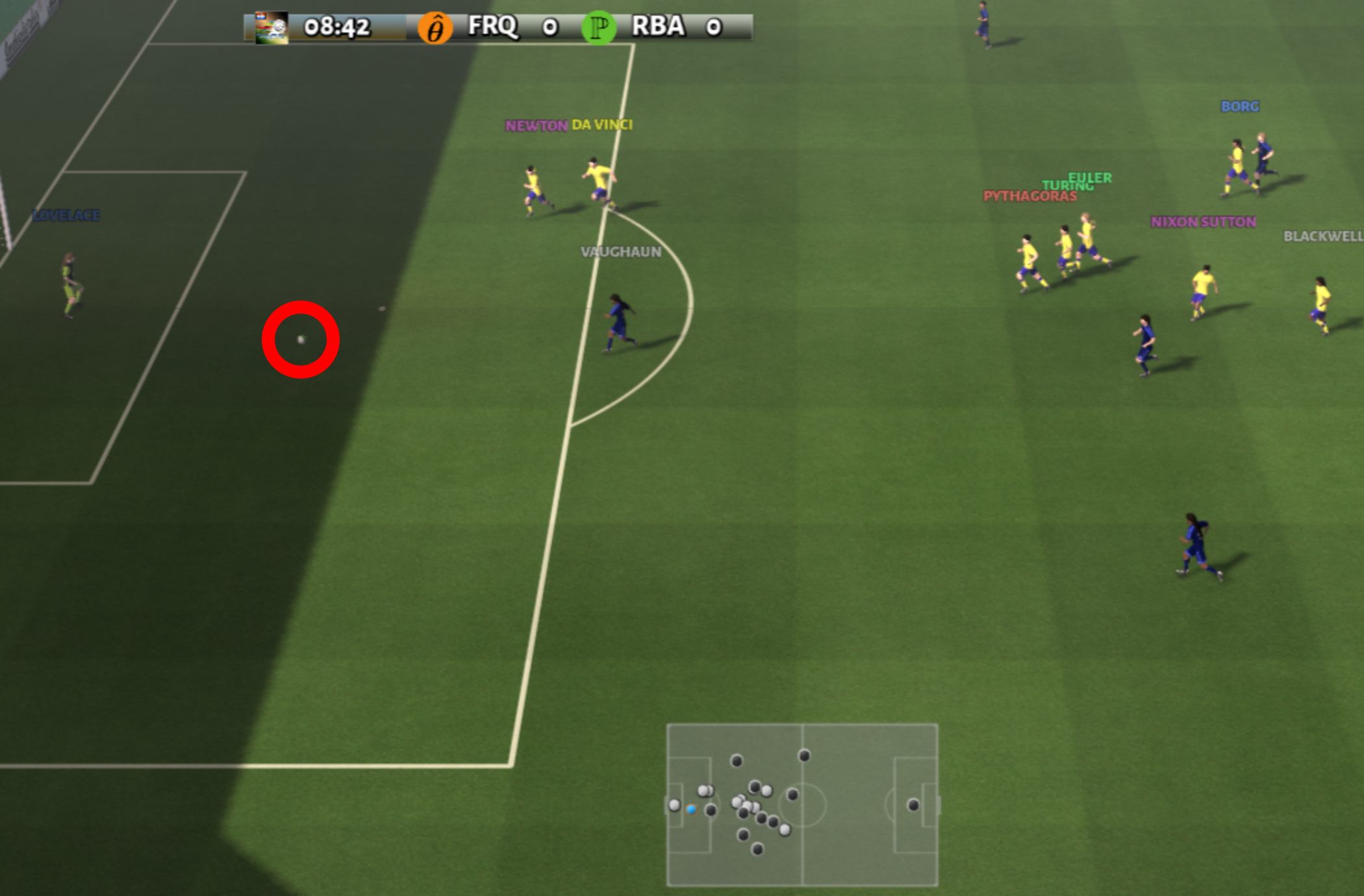}   
        \caption{Opponent scoring goal.}
        \label{fig:grf_opp_goal}
    \end{subfigure}
    \begin{subfigure}{0.49\columnwidth}
    \centering
        \includegraphics[width=\columnwidth-0.9cm,height=2cm]{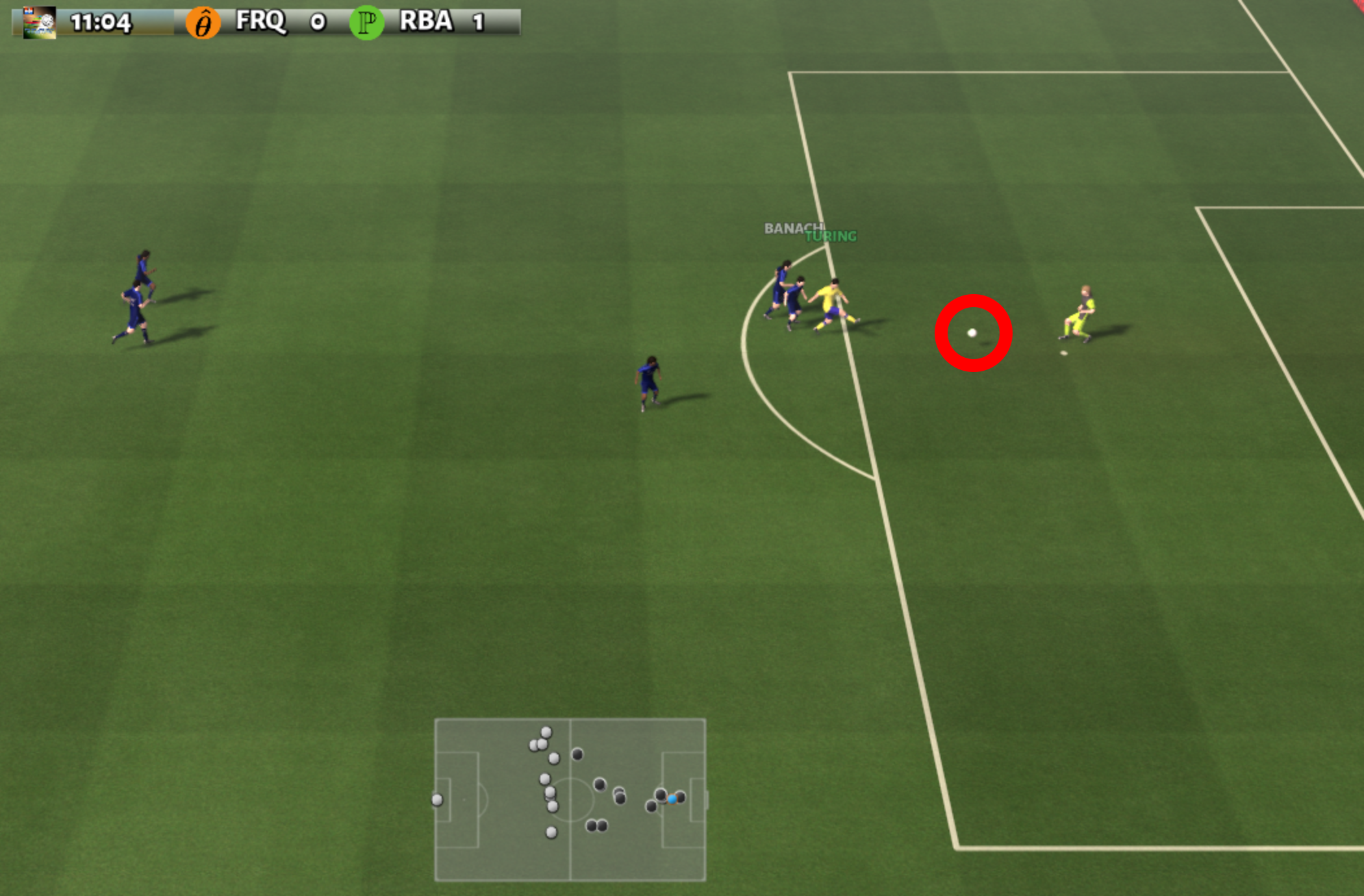}   
        \caption{Team scoring goal.}
        \label{fig:grf_team_goal}
    \end{subfigure}
    \vspace{0.5cm} 
    \caption{Two goal scenarios in GRF with ball marked in red.}
    \label{fig:grf_env}
    \vspace{0.7cm}
\end{figure} 

Our approach is flexible to measure the effects of specific groups. We define a fixed order $\sigma = (\boldsymbol{p}_{o}, \boldsymbol{p}_{t}, p_{s})$, with $\boldsymbol{p}_{o}$ opponents, $\boldsymbol{p}_{t}$ teammates, and $p_{s}$ the striker or ball player. Groups $\boldsymbol{p}_{o}$ and $\boldsymbol{p}_{t}$ are processed jointly, and ICVs are computed at intervals of $\Delta t = 2$ (Fig.~\ref{fig:MG_to_SVMG}). We measure $\bar{\nu}_d = \bar{\mathcal{J}}(\pi_{tz}(\cdot \mid s_{t,(k+1)}) \,\|\, \pi_{tz}(\cdot \mid s_{t,(k)}))$ as a proxy for strategy change, alongside $\mathcal{H}$ to capture decision certainty. The contributions over virtual time $\Tilde{t}$ $[\mathrm{min}]$ with seed=$175$ are shown in Fig.~\ref{fig:grf_history_ball}. We highlight three regions, where the situation can be reconstructed to investigate the cause. In the gray area, the team loses and regains possession near the penalty area, causing uncertainties and strategy adaption. In the orange region, the opponents scored a goal as reflected by a drop in $\hat{\Phi}(\nu_v)$ and a rise in $\hat{\Phi}(\bar{\nu}_d)$, indicating a strong strategy adaptation (scene in Fig.~\ref{fig:grf_opp_goal}). In the green region, opponents exert a negative impact on the ball player’s value and determinism just before it scored a goal (Fig.~\ref{fig:grf_team_goal}). In contrast, teammates contribute positively by increasing the ball player’s decisiveness to shoot. While $\hat{\Phi}(\nu_p)$ and $\hat{\Phi}(\bar{\nu_d})$ do not strictly match to $\hat{\Phi}(\nu_v)$, they may still provide meaningful information on players' contributions.

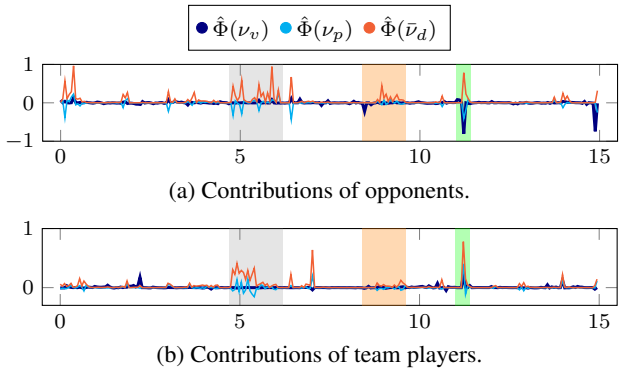
\begin{figure}[htb!]
\centering
\begin{subfigure}{1.0\columnwidth}
    \centering
    \vbox{
    \begin{tikzpicture}
        \begin{axis}[
            width=\columnwidth, height=2cm,
            hide axis,
            xmin=0, xmax=0, ymin=0, ymax=0,
            legend columns=3,
            legend style={at={(0.52,0.0)},anchor=center}
        ]
        \addlegendimage{only marks, color=NavyBlue}
        \addlegendentry{\footnotesize{$\hat{\Phi}(\nu_v)$ \hspace{0.2cm}}}
        \addlegendimage{only marks, color=cyan}
        \addlegendentry{\footnotesize{$\hat{\Phi}(\nu_p)$ \hspace{0.2cm}}}
        \addlegendimage{only marks, color=RedOrange}
        \addlegendentry{\footnotesize{$\hat{\Phi}(\bar{\nu}_{d})$}}
        \end{axis}
    \end{tikzpicture}
    }
    \vspace{-0.05cm}
    \vbox{
    \hspace{-0.38cm}
    \begin{tikzpicture}
      \begin{axis}[
            width=\columnwidth+0.5cm, height=2.6cm,
            xmin=-0.5, xmax=15.5,
            ymin=-1.0, ymax=1.0,
            xtick distance=5,
            ytick distance=1.0,
            enlargelimits=false,
            ticklabel style={font=\footnotesize},
            every axis plot/.append style={thick},
            grid=none,
        ]

        \draw[fill=gray, fill opacity=0.2, draw=none]
        (axis cs:4.7,-1.0) rectangle (axis cs:6.2,1);

        \draw[fill=orange, fill opacity=0.3, draw=none]
        (axis cs:8.4,-1.0) rectangle (axis cs:9.6,1);

        \draw[fill=green, fill opacity=0.3, draw=none]
        (axis cs:11.0,-1.0) rectangle (axis cs:11.4,1);
    
        \addplot[
          NavyBlue,
          line width=0.4mm,
          name path=v_o
        ] table [x=t, y=v_o] {exp/grf_history_ball3.dat};
        
        \addplot[
          cyan,
          line width=0.2mm,
          name path=p_o
        ] table [x=t, y=p_o] {exp/grf_history_ball3.dat};

        \addplot[
          RedOrange,
          line width=0.2mm,
          name path=j_o
        ] table [x=t, y=j_o] {exp/grf_history_ball3.dat};
        
      \end{axis}
    \end{tikzpicture}
    }
    \vspace{-0.1cm}
    \caption{Contributions of opponents.}
    \label{fig:grf_history_opp}
\end{subfigure}
\begin{subfigure}{1.0\columnwidth}
    \centering
    \vspace{0.35cm}
    \begin{tikzpicture}
      \begin{axis}[
            width=\columnwidth+0.5cm, height=2.6cm,
            xmin=-0.5, xmax=15.5,
            ymin=-0.3, ymax=1.0,
            xtick distance=5,
            ytick distance=1.0,
            enlargelimits=false,
            ticklabel style={font=\footnotesize},
            every axis plot/.append style={thick},
            grid=none,
        ]

        \draw[fill=gray, fill opacity=0.2, draw=none]
        (axis cs:4.7,-1.0) rectangle (axis cs:6.2,1);

        \draw[fill=orange, fill opacity=0.3, draw=none]
        (axis cs:8.4,-1.0) rectangle (axis cs:9.6,1);

        \draw[fill=green, fill opacity=0.3, draw=none]
        (axis cs:11.0,-1.0) rectangle (axis cs:11.4,1);
    
        \addplot[
          NavyBlue,
          line width=0.4mm,
          name path=v_c
        ] table [x=t, y=v_c] {exp/grf_history_ball3.dat};
        
        \addplot[
          cyan,
          line width=0.2mm,
          name path=p_c
        ] table [x=t, y=p_c] {exp/grf_history_ball3.dat};

        \addplot[
          RedOrange,
          line width=0.2mm,
          name path=j_c
        ] table [x=t, y=j_c] {exp/grf_history_ball3.dat};
        
      \end{axis}
    \end{tikzpicture}
    \vspace{-0.1cm}
    \caption{Contributions of team players.}
    \label{fig:grf_history_coal}
\end{subfigure}
\vspace{0.15cm}
\caption{GRF history effects on ball player or striker.}
\label{fig:grf_history_ball}
\vspace{0.7cm}
\end{figure}

Given the long GRF time horizon ($T=3000$), we compute ICV values over $M=50$ episodes, $\Delta t = 5$ steps, and normalize them by $\kappa = 0.04$. We analyze the contributions of opponents and teammates to the attacking trio, i.e., \emph{right midfield} (rm), \emph{center forward} (cf), and \emph{left midfield} (lf), as shown in Fig.~\ref{fig:grf_opps_on_trio}, and to the ball player only in Fig.~\ref{fig:grf_opps_on_ball}. We decompose the peak contributions $\hat{\Phi}(\nu_{p})$ to the trio by $\mathcal{H}^{rm}$, $\mathcal{H}^{cf}$, and $\mathcal{H}^{lm}$. We observe that credit got assigned to contributions increasing or decreasing decision certainty that again correlates to value effects. The decrease in $\mathcal{H}^{rm}$ by opponents and increase in $\mathcal{H}^{lm}$ by teammates suggest slight insecurities among the right-side players. Interestingly, none of the players exhibited a noticeable average impact on the ball player’s performance $\hat{\Phi}(\nu_v)$ and decisions $\hat{\Phi}(\nu_p)$, indicating a robust behavior to disturbances by frequently adapting its strategy, as reflected in high values of $\hat{\Phi}(\bar{\nu}_{d})$.

\begin{figure}[htb!]
    \centering
    \begin{subfigure}{0.55\columnwidth}
        \centering
        \vbox{
        \begin{tikzpicture}
        \hspace{0.35cm}
            \begin{axis}[
                width=\columnwidth, height=2cm,
                hide axis,
                xmin=0, xmax=0, ymin=0, ymax=0,
                legend columns=4,
                legend style={at={(0.8,0.0)},anchor=center}
            ]
            \addlegendimage{only marks, color=NavyBlue}
            \addlegendentry{\footnotesize{$\hat{\Phi}(\nu_v)$} \hspace{0.2cm}}
            \addlegendimage{only marks, color=RoyalBlue}
            \addlegendentry{\footnotesize{$\mathcal{H}^{rm}$} \hspace{0.2cm}}
            \addlegendimage{only marks, color= SteelBlue!80}
            \addlegendentry{\footnotesize{$\mathcal{H}^{cf}$} \hspace{0.2cm}}
            \addlegendimage{only marks, color= DimGray}
            \addlegendentry{\footnotesize{$\mathcal{H}^{lm}$}}
            \end{axis}
        \end{tikzpicture}
        }
        \vspace{-0.05cm}
        \vbox{
        \begin{tikzpicture}
            \begin{axis}[
                width=\columnwidth+0.4cm,
                height=3.5cm,
                enlarge x limits=0.55,
                xtick style={/pgfplots/major tick length=0pt},
                xtick={1, 2},
                xticklabels={Opp, Team},
                ybar,
                bar width=7pt,
                ymin=-1, ymax=1,
                ytick distance=0.5,
                ticklabel style={font=\footnotesize},
                xticklabel style={yshift=4pt},
            ]
            \draw[solid, DarkSlateGray!50] (axis cs:1.5,-1) -- (axis cs:1.5,1);
            
            \addplot[fill=NavyBlue , draw= NavyBlue]
                coordinates {(1,-0.3) (2,0.8)};
            \addplot[fill= RoyalBlue, draw= RoyalBlue]
                coordinates {(1,-0.95) (2,0.2)};
            \addplot[fill=SteelBlue!80 , draw=SteelBlue!80]
                coordinates {(1,-0.4) (2, 0.35)};
            \addplot[fill=DimGray, draw=DimGray]
                coordinates {(1,0.-0.1) (2,0.6)};
            \end{axis}
        \end{tikzpicture}
        \vspace{-0.05cm}
        \caption{Effect on forwarding players.}
        \label{fig:grf_opps_on_trio}
        }
    \end{subfigure}
    \hspace{-0.2cm}
    \begin{subfigure}{0.45\columnwidth}
        \centering
        \vbox{
        \begin{tikzpicture}
        \hspace{0.36cm}
            \begin{axis}[
                width=\columnwidth, height=2cm,
                hide axis,
                xmin=0, xmax=0, ymin=0, ymax=0,
                legend columns=3,
                legend style={at={(0,0.0)},anchor=center}
            ]
            \addlegendimage{only marks, color=NavyBlue}
            \addlegendentry{\footnotesize{$\hat{\Phi}(\nu_v)$} \hspace{0.2cm}}
            \addlegendimage{only marks, color=RoyalBlue}
            \addlegendentry{\footnotesize{$\hat{\Phi}(\nu_p)$} \hspace{0.2cm}}
            \addlegendimage{only marks, color= SteelBlue!80}
            \addlegendentry{\footnotesize{$\hat{\Phi}(\bar{\nu}_{d})$}}
            \end{axis}
        \end{tikzpicture}
        }
        \vspace{-0.05cm}
        \vbox{
        \begin{tikzpicture}
            \begin{axis}[
                width=\columnwidth+0.5cm,
                height=3.5cm,
                enlarge x limits=0.55,
                xtick style={/pgfplots/major tick length=0pt},
                xtick={1, 2, 3},
                xticklabels={Opp, Team},
                ybar,
                bar width=7pt,
                ymin=-1, ymax=1,
                ytick distance=0.5,
                ticklabel style={font=\footnotesize},
                xticklabel style={yshift=4pt},
            ]
            \draw[solid, DarkSlateGray!50] (axis cs:1.5,-1) -- (axis cs:1.5,1);
            
            \addplot[fill=NavyBlue , draw= NavyBlue]
                coordinates {(1,-0.04) (2,0.06)};
            \addplot[fill= RoyalBlue, draw= RoyalBlue]
                coordinates {(1,-0.03) (2,0.05)};
            \addplot[fill= SteelBlue!80, draw= SteelBlue!80]
                coordinates {(1,0.95) (2,0.6)};
            \end{axis}
        \end{tikzpicture}
        \vspace{-0.05cm}
        \caption{Effect on ball player.}
        \label{fig:grf_opps_on_ball}
        }
    \end{subfigure}
    \vspace{0.1cm}
    \caption{ICVs in GRF of opponents and team players on forwarding and ball player in terms of value, determinism and strategy change.}
    \label{fig:grf_results}
    \vspace{0.4cm}
\end{figure}
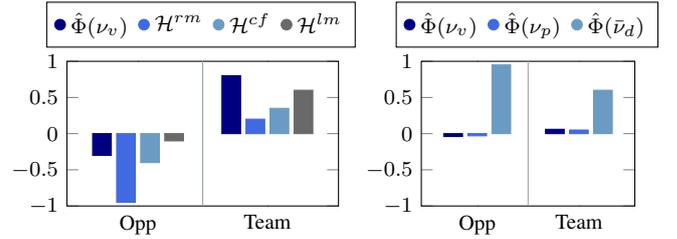

\section{Conclusions}\label{sec:Conclusion}
This paper introduced \emph{Intended Cooperation Values}~(ICVs), a novel causal action attribution method that explains agent behavior through policy influence rather than reward-based evaluations. ICVs quantify how individual actions affect teammates’ decision certainty and preference alignment, effectively capturing implicit cooperation and consensus. The approach proved effective across a variety of environments (cooperative, competitive, and mixed-motive). A major advantage of ICV lies in its simplicity and generality to operate without requiring reward access or architectural changes, and it can be applied both online and offline. This makes it well-suited for scenarios where interpretability is critical but value functions are unavailable or unreliable. In such cases, any form of domain knowledge (e.g., being determined to a specific action, as the \emph{load} action in LBF) may serve as a heuristic to evaluate situations and correctly interpret ICV results. Furthermore, per-step histories of ICVs allow more fine-grained analysis to extract behavioral insights when value feedback is missing, without requiring the situation to be rendered. This is particularly well suited for complex dynamic tasks like GRF. Thus, ICVs offer a principled way to analyze intentions and influences, grounded in game-theoretic and information-theoretic principles. However, the method also has limitations. Computing ICVs in complex, large-scale environments requires considerable post-processing due to the need for sequential action decomposition and intermediate state reconstruction. In addition, our method assumed full state observability, which may limit applicability in some settings. Nonetheless, both SVMG and ICV can still be applied without modification in partially observable domains. However, the resulting ICVs may be less accurate, as some action effects may not be visible to all agents and thus may not alter their policy distributions. More generally, Shapley-based methods inherently involve averaging over all marginal contributions, which can reduce accuracy even in fully observable settings.



\clearpage

\begin{ack}
We gratefully acknowledge the authors of TiZero~\cite{tizero} for providing access to their full model, which was essential for our experiments.
\end{ack}

\section*{Ethics Statement} Our method is intended to enhance transparency and safety in MARL systems and does not incorporate any form of adversarial attack and pose no risk to human safety. All experiments were conducted in a simulated environment, without raising ethical or fairness concerns.

\bibliography{refs}

\clearpage

\appendix
\onecolumn

\begin{center}
    {\LARGE \textbf{Supplementary Material}}
\end{center}

\section{Shapley Value Axioms}\label{app:shapley_axioms}

\begin{enumerate}
    \item \textbf{Efficiency:} The sum of the SVs for all players equals the total value of the grand coalition:
    \begin{equation}
        \sum_{i=1}^{n} \phi_i(v) = v(N)\,.
        \label{eq:shap_axiom_efficiency}
    \end{equation}

    \item \textbf{Symmetry:}  If two players are interchangeable (i.e., they contribute equally to every possible coalition),
    \begin{equation}
        v(C \cup \{i\}) = v(C \cup \{j\}) \quad \text{for all  } C \subseteq N \setminus \{i,j\}\,,
        \label{eq:shap_axiom_symmetry}
    \end{equation}
    then they receive the same SV $\phi_i(v) = \phi_j(v)$.

    \item \textbf{Additivity:} For any two games $v$ and $w$, the SV of their sum $v+w$ is the sum of their SVs:
    \begin{equation}
        \phi_i(v + w) = \phi_i(v) + \phi_i(w) \quad \text{for all } i\,.
        \label{eq:shap_axioms_additivity}
    \end{equation}
    
    \item \textbf{Dummy Player:} If a player does not contribute any additional value to any coalition $C \subseteq N$,
    \begin{equation}
        v(C \cup \{i\}) = v(C)\,,
        \label{eq:shap_axioms_dummy}
    \end{equation}
    then its SV is zero $\phi_i(v) = 0$.
\end{enumerate}

\section{Propositions and Proofs}\label{app:instr_proofs}

In this section, we provide the propositions in a slightly more explicit form, followed by their proofs. 

\subsection{Advantage}\label{app:adv_proof}

\paragraph{Proposition 1.} Given the value function $V^j(s)$ as defined by Eq.~\eqref{eq:value_function} that is learned under $\mathcal{G}_m$ dynamics, and employing it as a \emph{static predictor} within $\mathcal{G}_s$, then for each transition of intermediate states $s_{t,(k-1)} \rightarrow s_{t,(k)}$ induced by processing action $a_t^i$, the value-based marginal contribution, which is given as
$$\Delta \nu_v = \sum_{j \in \mathcal{C}_{\sigma}^i} \left[ V^j(s_{t,(k)}) - V^j(s_{t,(k-1)}) \right ],$$
quantifies an undiscounted sampled advantage $\hat{\Lambda}(s_{t,(k-1)}, a_t^i, s_{t, (k)})$ for players in $\mathcal{C}_{\sigma}^i$ attributable to agent $i$:
$$\Delta \nu_{v} = \sum_{j \in \mathcal{C}_{\sigma}^i} \hat{\Lambda}^j(s_{t,(k-1)}, a_t^i, s_{t, (k)}).$$

\begin{proof}

We consider the intermediate states $(s_{t,(k-1)},s_{t,(k)}) \in \mathcal{S}$ and the action $a^i_t \in \mathcal{A}^i$ of agent $i=\sigma(k)$, processed according to Eq.~\eqref{eq:inter_state_udpate_explicit}. The MG value function $V^j(\cdot)$ is evaluated \emph{statically} at concrete state samples, which are either visited or constructed according to Def.~\ref{def:inter_state}. This implies no evaluation over all possible future intermediate states. Therefore, we transition from the expectation-based advantage function $\Lambda(s_{t,(k-1)},a^i_t)$ to the sample-based advantage $\hat{\Lambda}(s_{t,(k-1)},a^i_t, s_{t, (k)})$ as in Def.~\eqref{def:advantage_func}. Following the definition of marginal contribution from Def.~\ref{def:icv_marginal}, $\hat{\Lambda}(s_{t,(k-1)},a^i_t, s_{t, (k)})$ is evaluated under the SVMG intermediate-step dynamics for each agent $j \in \mathcal{C}_{\sigma}^i$. This implies:

\begin{equation}
    \sum_{j \in \mathcal{C}_{\sigma}^i} \hat{\Lambda}(s_{t,(k-1)}, a_t^i, s_{t, (k)}) = \sum_{j \in \mathcal{C}_{\sigma}^i} \left[ R^j(s_{t,(k-1)}, a^i_t,s_{t,(k)}) + \gamma V^j(s_{t,(k)}) - V^j(s_{t,(k-1)}) \right].
    \label{eq:adv_proof1}
\end{equation}

As the SVMG framework does not include a reward function $R^j$, meaning rewards are only provided at full time-step transitions $t \rightarrow t+1$ and not at intermediate stages $k$, we have $R^j(s_{t,(k-1)}, a^i_t, s_{t,(k)}) = 0$ for all sub-step transitions. Thus, Eq.~\eqref{eq:adv_proof1} rewrites as:

\begin{equation}
    \sum_{j \in \mathcal{C}_{\sigma}^i} \hat{\Lambda}(s_{t,(k-1)}, a_t^i, s_{t, (k)}) = \sum_{j \in \mathcal{C}_{\sigma}^i} \left[ \gamma V^j(s_{t,(k)}) - V^j(s_{t,(k-1)}) \right].
    \label{eq:adv_proof2}
\end{equation}

During action processing in the SVMG chain, there is no increment of the time step $t$. Thus, we have $\gamma^0=1$, and Eq.~\eqref{eq:adv_proof2} becomes:

\begin{equation}
    \sum_{j \in \mathcal{C}_{\sigma}^i} \hat{\Lambda}(s_{t,(k-1)}, a_t^i, s_{t, (k)}) = \sum_{j \in \mathcal{C}_{\sigma}^i} \left[ V^j(s_{t,(k)}) - V^j(s_{t,(k-1)}) \right ].
    \label{eq:adv_proof3}
\end{equation}

By Defs.~\ref{def:icv_marginal} and~\ref{def:v_v}, the r.h.s. of Eq.~\eqref{eq:adv_proof3} corresponds to the value-based marginal contribution $\Delta \nu_v$. This  proves the proposition.
\end{proof}

\subsubsection{Remark}
The value function $V(s)$ is originally defined for states $s \in \mathcal{S}$ under MG dynamics, where \emph{all} agents act simultaneously at state $s$. It is learned by the agent's critic during training as an estimate of the expected return from $s$, as defined by Eq.~\eqref{eq:value_function}. However, as written in Proposition~1 and shown by~\citet{shapley_rl1}, $V(s)$ serves as a \emph{static} predictor when applying it as feature attribution \emph{without} ever evaluating the policy $\boldsymbol{\pi}$. Evaluating $V(s_{t,(k)})$ for intermediate states $s_{t,(k)} \in \mathcal{S}$ thus estimates the return \emph{as if} all agents acted simultaneously at $s_{t,(k)}$ under policy $\boldsymbol{\pi}$ and under MG dynamics. In the SVMG framework, however, the actions of all agents are fixed and do not react to such intermediate updates, as illustrated in Fig.~\ref{fig:MG_to_SVMG}. This results in a counterfactual or hypothetical evaluation: if the game were in state $s_{t,(k)}$ and all agents acted according to $\boldsymbol{\pi}$, what would the value be? Based on this reasoning, we leverage the MG-trained value function $V$ within the SVMG framework to quantify agents' intentional contributions during action processing. The undiscounted sampled advantage $\hat{\Lambda}(s_{t,(k-1)}, a_t^i, s_{t, (k)})$, when computed under the SVMG dynamics, thus corresponds to the empirical difference $V(s_{t,(k)}) - V(s_{t,(k-1)})$. This approach explicitly serves the purpose of counterfactual attribution.

It is important to note that applying the MG-trained value function $V$ in this manner generally does \emph{not} result in a Bellman-consistent advantage under SVMG dynamics. The reason is that $V(s_{t,(k)})$ does not represent the expected return if the process were initiated from $s_{t,(k)}$ and evolved according to the intermediate-step dynamics of the SVMG. Specifically, the states $s_{t,(k)}$ alone are not Markovian under SVMG dynamics when evaluated with the MG value function $V$. To fulfill the value recursion under the SVMG framework, one would need to enrich the intermediate state RVs defined in Def.~\ref{def:inter_state} by explicitly incorporating the step counter $k$ and permutation $\sigma$ into the state representation, i.e., by defining an augmented state $\tilde{S}_{t,(k)} = (k, \sigma, S_{t,(k)})$. Additionally, a separate value function $\tilde{V}(\tilde{s})$ would have to be trained or estimated explicitly under the intermediate-step transition kernels $P^{\sigma(k)}(\cdot \mid s_{t,(k)},a_t^{\sigma(k)})$ of the SVMG. For such a function $\tilde{V}$ to be learned correctly, rewards would need to be distributed at each intermediate step $k$, preserving both the Markov property and ensuring valid value recursion. Under these conditions, the difference $\hat{\Lambda}^j(s_{t,(k-1)}, a_t^i, s_{t, (k)})$ computed using $\tilde{V}(\tilde{s})$ would correspond to a Bellman update.

However, since our objective is to devise a method that does not require training an additional SVMG-specific model (see Sec.~\ref{sec:method}), we instead utilize the existing MG-trained value function $V$ directly as a numerical function. This allows us to evaluate $V$ at any encountered or reconstructed intermediate state realization $s_{t,(k)}$, provided $s_{t,(k)} \in \mathcal{S}$, thus making the computation $V(s_{t,(k)})$ both valid and practically feasible.

\subsection{Empowerment}\label{app:empower_proof}

\paragraph{Proposition 2.}
For each intermediate state $s_{t,(k)} \in \mathcal{S}$, the entropy-based marginal contribution, given as:
$$\Delta \nu_{p} = \sum_{j\in \mathcal{C}_{o}^{i}} \left[ \mathcal{H}\left( A_{t,(k+1)}^j \right) - \mathcal{H}\left( A_{t,(k)}^j \right) \right]\,,$$
measures the mutual information $\mathrm{I}(A_{t,(k)}^j ; a_t^i \mid s_{t,(k-1)})$ as the certainty increase in decision-making of players in $\mathcal{C}_{\sigma}^i$ attributable to agent $i$'s action $a_t^i$:
$$\Delta \nu_{p} = \sum_{j\in \mathcal{C}_{\sigma}^i} \mathrm{I}(A_{t,(k)}^j ; a_t^i \mid s_{t,(k-1)})\,.$$
Moreover, if the acting policy $\pi^i$ maximizes each conditional mutual information, then $\Delta \nu_{p}$ induces the \emph{instrumental empowerment}:
$$\Delta \nu_{p} =\sum_{j\in \mathcal{C}_{\sigma}^i} \mathcal{E}^j(s_{t,(k-1)}, i)\,.$$

\begin{proof}

After plugging $\nu_p$ from Def.~\ref{def:v_p} with \emph{peakedness} $\mathcal{H}$ into the marginal contribution from Def.~\ref{def:icv_marginal}, we get

\begin{align}
    \Delta \nu_{p} 
    &= \sum_{j\in \mathcal{C}_{o}^{i}} \left[ \mathcal{H}\left( A_{t,(k+1)}^j \right) - \mathcal{H}\left( A_{t,(k)}^j \right) \right] \\
    &= \sum_{j \in \mathcal{C}_{o}^{i}} \left[ (\log|\mathcal{A}^j| - H\left(A_{t,(k+1)}^j \mid s_{t,(k)}\right)) - (\log|\mathcal{A}^j| - H\left(A_{t,(k)}^j \mid s_{t,(k-1)}\right)) \right]\\
    &= \sum_{j \in \mathcal{C}_{o}^{i}} \left[ H\left(A_{t,(k)}^j \mid s_{t,(k-1)}\right) - H\left(A_{t,(k+1)}^j \mid s_{t,(k)}\right) \right]\,.
    \label{eq:proof_empower3}
\end{align}

We assume that $\pi^j(\cdot \mid s_{t,(k)}) \approx \pi^j(\cdot \mid s_{t,(k-1)}, a_t^i)$ holds (as otherwise communication-based policies would be required, which are not the scope of this work.). This implies that the state $s_{t,(k)}$ encodes sufficient information about $a_t^i$ to enable a consistent measurement of the RV $A^j_{t,(k)}$ at sub-step $k$ and state $s_{t,(k-1)}$ by \emph{observing} the action $a_t^i$, but without actually \emph{executing} it to transition to a new state $s_{t,(k)}$, as per Def.~\ref{def:inter_state}. Thus, Eq.~\eqref{eq:proof_empower3} transforms to:

\begin{equation}
    \Delta \nu_{p} = \sum_{j \in \mathcal{C}_{o}^{i}} \left[ H\left(A_{t,(k)}^j \mid s_{t,(k-1)}\right) - H\left(A_{t,(k)}^j \mid s_{t,(k-1)}, a_t^i\right) \right]\,,
    \label{eq:proof_empower4}
\end{equation}

which, by the definition of the conditional mutual information, gives:

\begin{equation}
    \Delta \nu_{p} = \sum_{j \in \mathcal{C}_{o}^{i}} \mathrm{I}\left(A_{t,(k)}^j ; a_t^i \mid s_{t,(k-1)} \right)\,.
    \label{eq:proof_empower5}
\end{equation}

The instrumental empowerment then trivially follows by Def.~\ref{def:instr_empowerment}.

\end{proof}

\end{document}